\numberwithin{equation}{section}
\numberwithin{figure}{section}
\theoremstyle{plain}
\newtheorem{thm}{\protect\theoremname}[section]
\newtheorem{lem}[thm]{\protect\lemmaname}
\theoremstyle{remark}
\newtheorem{rem}[thm]{\protect\remarkname}
\providecommand{\lemmaname}{Lemma}
\providecommand{\remarkname}{Remark}
\providecommand{\theoremname}{Theorem}
\begin{document}
\subjclass[2020]{Primary 68T07. Secondary 41A25, 41A30, 46E22.}
\title{A Class of Random-Kernel Network Models}
\begin{abstract}
We introduce random-kernel networks, a multilayer extension of random
feature models where depth is created by deterministic kernel composition
and randomness enters only in the outermost layer. We prove that deeper
constructions can approximate certain functions with fewer Monte Carlo
samples than any shallow counterpart, establishing a depth separation
theorem in sample complexity. 
\end{abstract}

\author{James Tian}
\address{Mathematical Reviews, 535 W. William St, Suite 210, Ann Arbor, MI
48103, USA}
\email{jft@ams.org}
\keywords{random features; reproducing kernel Hilbert spaces; random-kernel
networks; kernel composition; depth separation; sample complexity;
Monte Carlo approximation; deep kernel learning. }
\maketitle

\section{Introduction}\label{sec:1}

Random feature methods have become a standard tool for scaling up
kernel methods to large datasets. Since the seminal work of Rahimi
and Recht \cite{10.5555/2981562.2981710,10.5555/2981780.2981944},
a variety of random feature constructions have been proposed, giving
efficient algorithms for supervised learning, regression, and classification
while maintaining approximation guarantees derived from reproducing
kernel Hilbert space (RKHS) theory. The key idea is to approximate
a kernel $K(x,x')$ by a Monte Carlo average of randomized nonlinear
features of the input, thereby replacing expensive kernel evaluations
by finite-dimensional linear models.

Most existing analyses and applications of random features have focused
on shallow models, where the function is expressed as a linear combination
of random nonlinear features of the input. In parallel, the success
of deep neural networks has highlighted the role of compositional
structure and depth in achieving expressive and efficient representations.
This has motivated a growing line of work on deep kernel learning
\cite{pmlr-v51-wilson16,MR3714245}, the neural tangent kernel (NTK)
\cite{10.5555/3327757.3327948}, and other compositional kernel constructions
\cite{10.5555/2984093.2984132,NIPS2016_abea47ba,MR4400901}. These
approaches show that depth can fundamentally alter the geometry of
the induced RKHS and thereby improve learning capacity.

In this paper we develop and analyze random-kernel networks (RKNs),
a multilayer generalization of random feature models. Unlike classical
neural networks, where parameters are trained in every layer, our
construction fixes a base kernel and generates deeper layers by deterministic
composition. Each layer is defined as an RKHS induced by an integral
transform of the previous kernel, while randomness enters only through
Monte Carlo sampling in the outermost layer. Thus, in contrast to
standard deep networks, the only trainable parameters are the outermost
linear coefficients. Depth in our model refers to the compositional
structure of the kernel, not to additional layers of optimization.

This viewpoint is closely related to the literature on deep kernel
machines \cite{10.5555/2984093.2984132,pmlr-v51-wilson16}, randomized
neural networks \cite{471375,HUANG2006489}, and compositional kernel
learning \cite{NIPS2016_abea47ba,pmlr-v31-damianou13a}. However,
our contribution differs in that we provide a systematic definition
of multilayer random-kernel networks and a rigorous analysis of their
Monte Carlo approximation properties. In particular, we show that
depth can provide quantitative efficiency gains: there exist functions
that can be represented compactly at depth two, but for which every
depth-one approximation requires asymptotically more Monte Carlo samples.

Formally, we establish a depth separation theorem in sample complexity:
letting $N_{\mathrm{deep}}(\epsilon)$ and $N_{\mathrm{shallow}}(\epsilon)$
denote the minimal number of Monte Carlo samples needed by two-layer
and one-layer random-kernel networks, respectively, to achieve squared
$L^{2}$ error at most $\epsilon^{2}$, we prove that there exists
a depth-two function $f_{2}$ such that

\[
\frac{N_{\mathrm{shallow}}(\epsilon)}{N_{\mathrm{deep}}(\epsilon)}\to\infty
\]
as a construction parameter grows. This gives a strict, tunable efficiency
gain for depth. To our knowledge, this appears to be the first result
that establishes depth separation for Monte Carlo sample complexity
in kernel-based random feature models.

The present setting provides a bridge between random feature methods
and deep kernel constructions. On the one hand, it inherits the algorithmic
simplicity of random features, since training reduces to linear regression
in the outermost coefficients. On the other hand, it exhibits structural
advantages of depth traditionally associated with neural networks.
This makes random-kernel networks a promising mathematical model for
understanding the role of depth beyond standard parametric optimization.

\textbf{Organization.} \prettyref{sec:2} defines random-kernel networks
and their associated RKHSs. \prettyref{sec:3} develops the special
case of the ReLU kernel, deriving explicit formulas and a Monte Carlo
approximation bound for last-layer sampling in this widely studied
activation. \prettyref{sec:4} presents a general depth separation
theorem, extending the ReLU analysis to a wide class of base kernels.

\section{Compositional Kernels via Random Features}\label{sec:2}

We now introduce the general framework of random-kernel networks,
which will serve as the foundation for the ReLU case study in \prettyref{sec:3}.
The construction starts from a base kernel-generating function and
builds deeper layers recursively by composition. Depth in this model
reflects the compositional structure of the kernels, rather than additional
layers of trainable parameters.

\textbf{Settings.} Let $X\subset\mathbb{R}^{d}$ be the input domain.
Fix a measurable, bounded function $K\colon\mathbb{R}\times\mathbb{R}\rightarrow\mathbb{R}$.
Let 
\[
Z=\mathbb{R}^{d}\times\mathbb{R}\times\mathbb{R}
\]
with generic element $z=\left(a,b,t\right)$, and let $\rho$ be a
probability measure on $Z$. 

Define the first-layer atom and kernel by 
\begin{align*}
\psi^{\left(1\right)}_{z}\left(x\right) & =K\left(\left\langle a,x\right\rangle +b,t\right)\\
K^{\left(1\right)}\left(x,x'\right) & =\int_{Z}\psi^{\left(1\right)}_{z}\left(x\right)\psi^{\left(1\right)}_{z}\left(x'\right)d\rho\left(z\right),\quad x,x'\in X.
\end{align*}
Then $K^{\left(1\right)}$ is positive definite (as an inner product
in $L^{2}\left(\rho\right)$) and determines an RKHS $\mathcal{H}^{\left(1\right)}$. 

For deeper layers we need to evaluate $K^{\left(l-1\right)}\left(a,x\right)$
for arbitrary $a\in\mathbb{R}^{d}$. We therefore extend $K^{\left(1\right)}$
to $\mathbb{R}^{d}\times X$ by the same integral formula. Recursively,
once $K^{\left(l-1\right)}$ is defined and extended to $\mathbb{R}^{d}\times X$,
set for $l\geq2$, 
\begin{align*}
\psi^{\left(l\right)}_{z}\left(x\right) & =K\left(K^{\left(l-1\right)}\left(a,x\right)+b,t\right),\\
K^{\left(l\right)}\left(x,x'\right) & =\int_{Z}\psi^{\left(l\right)}_{z}\left(x\right)\psi^{\left(l\right)}_{z}\left(x'\right)d\rho\left(z\right),\quad x,x'\in X.
\end{align*}
Again, $K^{\left(l\right)}$ is positive definite and determines an
RKHS $\mathcal{H}^{\left(l\right)}$. 

Each $\mathcal{H}^{\left(i\right)}$, $i=1,\dots,l$, consists of
functions of the form (see e.g., \cite{MR51437} and \cite[Theorem 4.21]{MR2450103})
\begin{equation}
f\left(x\right)=\int_{Z}c\left(z\right)\psi^{\left(i\right)}_{z}\left(x\right)d\rho\left(x\right),\quad c\in L^{2}\left(\rho\right),\label{eq:b1}
\end{equation}
with norm 
\begin{equation}
\left\Vert f\right\Vert _{\mathcal{H}^{\left(i\right)}}=\inf\left\{ \left\Vert c\right\Vert _{L^{2}\left(\rho\right)}:f=\int c\psi^{\left(i\right)}_{z}d\rho\right\} .\label{eq:b2}
\end{equation}

A depth-$l$ random-kernel network is defined as follows: sample i.i.d.
parameters $z_{j}=\left(a_{j},b_{j},t_{j}\right)\sim\rho$ and set
\[
f_{N}\left(x\right)=\sum^{N}_{j=1}\alpha_{j}\psi^{\left(l\right)}_{z_{j}}\left(x\right),\quad x\in X.
\]

In this model, only the coefficients $\alpha_{j}$ in the outermost
layer are trained. The inner layers appear through deterministic integral
transforms that define the kernels $K^{\left(l\right)}$. This is
therefore different from the traditional notion of a multilayer neural
network, where weights in every layer are optimized. Here the notion
of depth refers to the compositional structure of the kernel rather
than to additional layers of trainable parameters. The benefit of
this construction is that certain compositional target functions can
be represented with substantially smaller outermost coefficient norms
(see $\left\Vert c\right\Vert _{L^{2}\left(\rho\right)}$ in \eqref{eq:b1}--\eqref{eq:b2})
than is possible in the single-layer case, leading to provable efficiency
gains in the number of random features required for approximation. 

\section{Depth Separation on a Slice: A Two-Layer ReLU Example}\label{sec:3}

This section treats a concrete two-layer ReLU case to illustrate the
depth-efficiency phenomenon on a one-dimensional slice. The next section
states and proves a general depth-$l$ theorem under broader assumptions. 

Our strategy is to design a specific two-layer function that is inherently
compositional and then analyze what is required to approximate it
using both a deep (two-layer) and a shallow (one-layer) random-kernel
network. To make the analysis tractable, we focus on the approximation
error along a specific one-dimensional slice of the input space, $x=sv$.
This is sufficient to establish the desired separation; if a shallow
model requires a large number of samples to approximate the function
on this single line, its sample complexity for the function over the
entire domain must also be large.

The argument proceeds in three main steps. First, we construct a ``steep''
two-layer target function $f_{2}$, whose steepness is controlled
by a tunable parameter $m_{*}$. Second, we show that this function
can be represented efficiently in the two-layer space $\mathcal{H}^{(2)}$
with a small and well-behaved coefficient norm. Finally, we prove
that any function in the one-layer space $\mathcal{H}^{(1)}$ that
approximates $f_{2}$ is forced to have a large norm that scales with
$m_{*}$. This occurs because shallow functions have a bounded derivative
tied to their norm, and approximating the steep target function forces
this norm to explode. Comparing the Monte Carlo sample complexities,
which are proportional to the squared norms, reveals a separation
that grows with $m^{2}_{*}$, thus proving the main assertion (\prettyref{thm:7}). 

Consider $X=\mathbb{R}^{d}$, $K\left(s,t\right)=\sigma\left(s\right)=\max\left\{ 0,s\right\} =$
ReLU, independent of $t$. Fix a Borel probability measure $\rho$
on the parameter space $Z=\mathbb{R}^{d}\times\mathbb{R}$ with generic
elements $z=\left(a,b\right)$, such that 
\begin{equation}
\int_{Z}\left(\left\Vert a\right\Vert ^{2}+b^{2}\right)d\rho\left(a,b\right)<\infty.\label{eq:c1}
\end{equation}
Define the first layer atom and kernel by 
\[
\psi^{\left(1\right)}_{z}\left(x\right)=\sigma\left(\left\langle a,x\right\rangle +b\right),\quad K^{\left(1\right)}\left(x,x'\right)=\int_{Z}\psi^{\left(1\right)}_{z}\left(x\right)\psi^{\left(1\right)}_{z}\left(x'\right)d\rho\left(z\right).
\]

\begin{lem}
\label{lem:1}Under \eqref{eq:c1}, $K^{\left(1\right)}\left(x,x'\right)$
is well defined for all $x,x'\in\mathbb{R}^{d}$.
\end{lem}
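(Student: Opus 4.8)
The plan is to reduce the claim to a routine $L^{2}(\rho)$ integrability check, exploiting the elementary ReLU bound $0\le\sigma(s)\le|s|$. Since $K^{(1)}(x,x')$ is the $\rho$-integral of the product $\psi^{(1)}_{z}(x)\psi^{(1)}_{z}(x')$, it suffices to verify that this product lies in $L^{1}(\rho)$ for each fixed pair $x,x'\in\mathbb{R}^{d}$; the defining integral is then finite and absolutely convergent, and in particular well defined.

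First I would fix $x\in\mathbb{R}^{d}$ and control the single atom $z\mapsto\psi^{(1)}_{z}(x)$ in $L^{2}(\rho)$. Combining $\sigma(s)\le|s|$ with the pointwise estimate $(\langle a,x\rangle+b)^{2}\le 2\|a\|^{2}\|x\|^{2}+2b^{2}$, which follows from $(u+v)^{2}\le 2u^{2}+2v^{2}$ and Cauchy--Schwarz applied to $\langle a,x\rangle$, one obtains
\[
\int_{Z}\psi^{(1)}_{z}(x)^{2}\,d\rho(z)\le 2\|x\|^{2}\int_{Z}\|a\|^{2}\,d\rho+2\int_{Z}b^{2}\,d\rho.
\]
The right-hand side is finite precisely by the second-moment hypothesis \eqref{eq:c1}, so each atom is a genuine element of $L^{2}(\rho)$, with an $L^{2}$-norm that grows at most linearly in $\|x\|$.

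The second step is to apply the Cauchy--Schwarz inequality in $L^{2}(\rho)$ to the two atoms $\psi^{(1)}_{\cdot}(x)$ and $\psi^{(1)}_{\cdot}(x')$, which immediately yields
\[
\int_{Z}\bigl|\psi^{(1)}_{z}(x)\,\psi^{(1)}_{z}(x')\bigr|\,d\rho(z)\le\Bigl(\int_{Z}\psi^{(1)}_{z}(x)^{2}\,d\rho\Bigr)^{1/2}\Bigl(\int_{Z}\psi^{(1)}_{z}(x')^{2}\,d\rho\Bigr)^{1/2}<\infty.
\]
This is the desired absolute integrability, so $K^{(1)}(x,x')$ is well defined and finite; since the bound holds for every $x,x'\in\mathbb{R}^{d}$, the same argument simultaneously justifies the extension of $K^{(1)}$ to all of $\mathbb{R}^{d}\times\mathbb{R}^{d}$ used in the recursive construction.

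I anticipate no substantive obstacle here: the only points meriting a line of care are the measurability of $z\mapsto\psi^{(1)}_{z}(x)$, which is immediate because $\sigma$ is continuous and $z\mapsto\langle a,x\rangle+b$ is affine hence Borel, and the bookkeeping that the two dominating terms $\|a\|^{2}$ and $b^{2}$ are exactly the quantities whose finiteness \eqref{eq:c1} guarantees. The conceptual content is simply that $\sigma(s)\le|s|$ lets the unbounded ReLU be dominated by its affine argument, thereby converting well-definedness into the finite-second-moment condition imposed on $\rho$.
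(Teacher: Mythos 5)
Your proposal is correct and follows essentially the same route as the paper: bound the ReLU atom by the absolute value of its affine argument, deduce $\psi^{(1)}_{\cdot}(x)\in L^{2}(\rho)$ from the second-moment condition \eqref{eq:c1}, and conclude via Cauchy--Schwarz in $L^{2}(\rho)$. The only cosmetic difference is that you make explicit the elementary inequality $(u+v)^{2}\le 2u^{2}+2v^{2}$, which the paper leaves implicit in asserting $\int_{Z}(\Vert a\Vert\Vert x\Vert+|b|)^{2}\,d\rho<\infty$.
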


\begin{proof}
For any $x\in\mathbb{R}^{d}$, 
\[
\left|\psi_{z}\left(x\right)\right|=\sigma\left(\left\langle a,x\right\rangle +b\right)\leq\left|\left\langle a,x\right\rangle \right|+\left|b\right|\leq\left\Vert a\right\Vert \left\Vert x\right\Vert +\left|b\right|
\]
so that 
\[
\int_{Z}\left|\psi^{\left(1\right)}_{z}\left(x\right)\right|^{2}d\rho\left(z\right)\leq\int_{Z}\left(\left\Vert a\right\Vert \left\Vert x\right\Vert +\left|b\right|\right)^{2}d\rho\left(z\right)<\infty
\]
by \eqref{eq:c1}. By Cauchy-Schwarz, 
\[
\left|K^{\left(1\right)}\left(x,x'\right)\right|\leq\left(\int_{Z}\left|\psi^{\left(1\right)}_{z}\left(x\right)\right|^{2}d\rho\left(z\right)\right)^{1/2}\left(\int_{Z}\left|\psi^{\left(1\right)}_{z}\left(x'\right)\right|^{2}d\rho\left(z\right)\right)^{1/2}<\infty
\]
for all $x,x'\in\mathbb{R}^{d}$.
\end{proof}
With this, the second layer is also well defined: 
\[
\psi^{\left(2\right)}_{z}\left(x\right)=\sigma\left(K^{\left(1\right)}\left(a,x\right)+b\right),\quad K^{\left(2\right)}\left(x,x'\right)=\int_{Z}\psi^{\left(2\right)}_{z}\left(x\right)\psi^{\left(2\right)}_{z}\left(x'\right)d\rho\left(z\right)
\]
because $K^{\left(1\right)}$ is finite, so the exact same square-integrability
and Cauchy-Schwarz argument applies again.

Now, fix a unit vector $v\in\mathbb{R}^{d}$. Let
\[
\rho=\frac{1}{2}\rho_{uni}+\frac{1}{2}\rho_{spec}
\]
 where $\rho_{uni}$ is any probability measure with finite second
moments (i.e., \eqref{eq:c1} holds), and 
\[
\rho_{spec}=\frac{1}{4}\delta_{\left(v,B\right)}+\frac{1}{4}\left(\delta_{\left(v,-\beta_{0}\right)}+\delta_{\left(v,-\beta_{1/2}\right)}+\delta_{\left(v,-\beta_{1}\right)}\right)
\]
with $\beta_{0},\beta_{1/2},\beta_{1}>1$ (precise values will be
set later). 

Define 
\[
G\left(s\right)\coloneqq K^{\left(1\right)}\left(v,sv\right)=\int_{Z}\sigma\left(\left\langle a,v\right\rangle +b\right)\sigma\left(\left\langle a,sv\right\rangle +b\right)d\rho\left(a,b\right),\quad s\in\left[0,1\right].
\]

\begin{lem}
\label{lem:3}There exists a constant $m_{*}>0$ such that 
\[
G'\left(s\right)\geq m_{*}
\]
for a.e. $s\in\left[0,1\right]$. 
\end{lem}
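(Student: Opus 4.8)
The plan is to exploit the decomposition $\rho=\tfrac12\rho_{uni}+\tfrac12\rho_{spec}$ and to show that essentially all of the positive slope of $G$ comes from the single atom $(v,B)$, with the remainder controlled by the second-moment hypothesis \eqref{eq:c1}. Since $v$ is a unit vector, writing $u=\langle a,v\rangle$ turns the slice integrand into $\sigma(u+b)\sigma(su+b)$, so that $G(s)=\int_Z \sigma(u+b)\sigma(su+b)\,d\rho(a,b)$ for $s\in[0,1]$.

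First I would evaluate the $\rho_{spec}$ contribution in closed form. For each atom $(v,-\beta_j)$ the first factor is $\sigma(\langle v,v\rangle-\beta_j)=\sigma(1-\beta_j)=0$ because $\beta_j>1$, so these three atoms contribute nothing to $G$ (their role is to seat kinks in the later two-layer target, not to shape $G$ itself). Only $(v,B)$ survives, and taking $B>0$ gives $1+B>0$ and $s+B>0$ throughout $[0,1]$, so this atom contributes exactly $\tfrac18(1+B)(s+B)$, a linear function of $s$ whose derivative is the constant $\tfrac18(1+B)$.

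Next I would treat the $\rho_{uni}$ part. For fixed $(a,b)$ the map $s\mapsto\sigma(u+b)\sigma(su+b)$ is Lipschitz with constant $|u|\sigma(u+b)$, and $\int_Z |u|\sigma(u+b)\,d\rho_{uni}\le \int_Z |u|(|u|+|b|)\,d\rho_{uni}<\infty$ by \eqref{eq:c1}; hence $G$ is Lipschitz on $[0,1]$, differentiable a.e., and at a.e.\ $s$ differentiation under the integral yields
\[
G'(s)=\tfrac12\int_Z u\,\sigma(u+b)\,\mathbf{1}\{su+b>0\}\,d\rho_{uni}(a,b)+\tfrac18(1+B).
\]
The conceptual crux is that the $\rho_{uni}$ integral has indefinite sign (it is negative wherever $u<0$ but $su+b>0$), so positivity of $G'$ cannot be expected to come from it; instead I bound it below by $-C$ with $C:=\tfrac12\int_Z|u|\sigma(u+b)\,d\rho_{uni}$, a finite constant depending only on the fixed $\rho_{uni}$. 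Then $G'(s)\ge \tfrac18(1+B)-C$ a.e., and choosing $B$ large enough that $\tfrac18(1+B)>C$ lets me set $m_*:=\tfrac18(1+B)-C>0$. This not only proves the lemma but exhibits $m_*$ as a tunable quantity growing linearly in $B$, which is precisely what the subsequent separation (scaling with $m_*^2$) will exploit.

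The step I expect to be the main obstacle is the rigorous differentiation under the integral sign, since the integrand is only piecewise smooth in $s$ through the indicator $\mathbf{1}\{su+b>0\}$. I would justify it by noting that for fixed $(a,b)$ with $u\ne0$ the single bad point $s=-b/u$ is negligible, and that by Fubini the boundary set $\{su+b=0\}$ is $\rho_{uni}$-null for a.e.\ $s$; the uniform Lipschitz bound $|u|\sigma(u+b)\in L^1(\rho_{uni})$ then legitimizes the dominated-convergence differentiation at almost every $s$.
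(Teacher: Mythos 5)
Your proposal is correct and follows essentially the same route as the paper: split $\rho=\tfrac12\rho_{uni}+\tfrac12\rho_{spec}$, observe that the three atoms $(v,-\beta_j)$ contribute nothing while the atom $(v,B)$ contributes exactly $\tfrac18(1+B)$ to the derivative, bound the $\rho_{uni}$ part below by a finite constant $-C$ (the paper's $-M$) via the moment condition \eqref{eq:c1}, and take $B$ large so that $m_*=\tfrac18(1+B)-C>0$. Your closing paragraph justifying differentiation under the integral (the $\rho_{uni}$-null boundary set $\{su+b=0\}$ plus an integrable Lipschitz dominating function) is actually more careful than the paper, which simply cites dominated convergence.
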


\begin{proof}
By dominated convergence / differentiating under integral, we have
\[
G'\left(s\right)=\int_{Z}\sigma\left(\left\langle a,v\right\rangle +b\right)1_{\left\{ \left\langle a,sv\right\rangle +b>0\right\} }\left\langle a,v\right\rangle d\rho\left(a,b\right)\quad a.e.\:s.
\]
Splitting $\rho=\frac{1}{2}\rho_{uni}+\frac{1}{2}\rho_{spec}$, then
\[
G'\left(s\right)=I_{uni}+I_{spec}
\]
where
\begin{align*}
I_{uni} & =\frac{1}{2}\int_{Z}\sigma\left(\left\langle a,v\right\rangle +b\right)1_{\left\{ \left\langle a,sv\right\rangle +b>0\right\} }\left\langle a,v\right\rangle d\rho_{uni}\left(a,b\right)\\
 & \geq-\frac{1}{2}\int_{Z}\sigma\left(\left\langle a,v\right\rangle +b\right)\left|\left\langle a,v\right\rangle \right|d\rho_{uni}\left(a,b\right)\eqqcolon-M
\end{align*}
and 
\[
I_{spec}=\frac{1}{2}\int_{Z}\sigma\left(\left\langle a,v\right\rangle +b\right)1_{\left\{ \left\langle a,sv\right\rangle +b>0\right\} }\left\langle a,v\right\rangle d\rho_{spec}\left(a,b\right)=\frac{1}{8}\left(1+B\right).
\]
Note that the contribution of $\frac{1}{4}\left(\delta_{\left(v,-\beta_{0}\right)}+\delta_{\left(v,-\beta_{1/2}\right)}+\delta_{\left(v,-\beta_{1}\right)}\right)$
in $I_{spec}$ is zero, since $\left\langle v,sv\right\rangle -\beta<0$,
for any $\beta\in\left\{ \beta_{0},\beta_{1/2},\beta_{1}\right\} $.
Thus
\[
G'\left(s\right)\geq\frac{1}{8}\left(1+B\right)-M\eqqcolon m_{*}
\]
for a.e. $s\in\left[0,1\right]$, which is positive for sufficiently
large $B$. 
\end{proof}
Let $\triangle=G\left(1\right)-G\left(0\right)$, and 
\[
\beta_{0}=G\left(0\right),\quad\beta_{1}=G\left(1\right),\quad\beta_{1/2}=\frac{1}{2}\left(G\left(0\right)+G\left(1\right)\right).
\]
Let $T$ be the tent function on $\left[\beta_{0},\beta_{1}\right]$:
\[
T\left(s\right)=\frac{2}{\triangle}\sigma\left(s-\beta_{0}\right)-\frac{4}{\triangle}\sigma\left(s-\beta_{1/2}\right)+\frac{2}{\triangle}\sigma\left(s-\beta_{1}\right).
\]
From above, $G\left(s\right)=K^{\left(1\right)}\left(v,sv\right)$,
$s\in\left[0,1\right]$, is increasing\@. Notice that 
\begin{align*}
G\left(0\right) & =K^{\left(1\right)}\left(v,0\right)=\int_{Z}\sigma\left(\left\langle a,v\right\rangle +b\right)\sigma\left(b\right)d\rho\left(a,b\right)\\
 & \geq\frac{1}{2}\int_{Z}\sigma\left(\left\langle a,v\right\rangle +b\right)\sigma\left(b\right)d\rho_{spec}\left(a,b\right)\\
 & =\frac{1}{8}\left(1+B\right)B
\end{align*}
so we can always choose $G\left(0\right)>1$. Thus, $\beta_{0},\beta_{1/2},\beta_{1}>1$. 

Now define 
\[
f_{2}\left(x\right)=T\left(K^{\left(1\right)}\left(v,x\right)\right).
\]

\begin{lem}
$f_{2}\in\mathcal{H}^{\left(2\right)}$ with $\left\Vert f_{2}\right\Vert _{\mathcal{H}^{\left(2\right)}}\leq\frac{8\sqrt{3}}{\triangle}$. 
\end{lem}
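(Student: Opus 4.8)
The plan is to use the RKHS norm formula \eqref{eq:b2} directly: since $\left\Vert f_{2}\right\Vert _{\mathcal{H}^{\left(2\right)}}$ is the infimum of $\left\Vert c\right\Vert _{L^{2}\left(\rho\right)}$ over all coefficients $c$ with $f_{2}=\int_{Z}c\left(z\right)\psi^{\left(2\right)}_{z}\,d\rho\left(z\right)$, it suffices to exhibit one admissible $c$ and bound its norm by $\frac{8\sqrt{3}}{\triangle}$. The key observation is that $\rho_{spec}$ was seeded with point masses at exactly the three parameter values needed to write $f_{2}$ as a finite atomic combination, so $c$ can be taken supported on three points.

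First I would unfold the definitions. By construction $\psi^{\left(2\right)}_{\left(a,b\right)}\left(x\right)=\sigma\left(K^{\left(1\right)}\left(a,x\right)+b\right)$, so choosing $a=v$ and $b\in\left\{ -\beta_{0},-\beta_{1/2},-\beta_{1}\right\}$ produces precisely the three shifted ReLU ridges appearing in the tent function $T$. Hence
\[
f_{2}\left(x\right)=T\left(K^{\left(1\right)}\left(v,x\right)\right)=\frac{2}{\triangle}\psi^{\left(2\right)}_{\left(v,-\beta_{0}\right)}\left(x\right)-\frac{4}{\triangle}\psi^{\left(2\right)}_{\left(v,-\beta_{1/2}\right)}\left(x\right)+\frac{2}{\triangle}\psi^{\left(2\right)}_{\left(v,-\beta_{1}\right)}\left(x\right),
\]
an exact three-term representation of $f_{2}$ by second-layer atoms, valid pointwise since $K^{\left(1\right)}$ (hence each atom) is well defined by \prettyref{lem:1}.

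Next I would match this to the integral representation \eqref{eq:b1}. Each of the three points $z_{k}\in\left\{ \left(v,-\beta_{0}\right),\left(v,-\beta_{1/2}\right),\left(v,-\beta_{1}\right)\right\}$ carries $\rho_{spec}$-weight $\frac{1}{4}$, hence $\rho$-mass $p_{k}\coloneqq\rho\left(\left\{ z_{k}\right\} \right)\geq\frac{1}{8}$. I define $c$ to vanish off $\left\{ z_{1},z_{2},z_{3}\right\}$ and on each atom to take the value forcing $c\left(z_{k}\right)p_{k}$ to equal the desired coefficient of $\psi^{\left(2\right)}_{z_{k}}$, e.g.\ $c\left(v,-\beta_{0}\right)=\frac{2}{\triangle p_{1}}$ and similarly for the other two. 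Then $\int_{Z}c\,\psi^{\left(2\right)}\,d\rho$ collapses to the three-term sum above, so $c$ is admissible and lies in $L^{2}\left(\rho\right)$.

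Finally I would compute the norm: $\left\Vert c\right\Vert _{L^{2}\left(\rho\right)}^{2}=\sum_{k}c\left(z_{k}\right)^{2}p_{k}=\frac{\left(2/\triangle\right)^{2}}{p_{1}}+\frac{\left(4/\triangle\right)^{2}}{p_{2}}+\frac{\left(2/\triangle\right)^{2}}{p_{3}}$, and $p_{k}\geq\frac{1}{8}$ yields $\left\Vert c\right\Vert _{L^{2}\left(\rho\right)}^{2}\leq 8\cdot\frac{4+16+4}{\triangle^{2}}=\frac{192}{\triangle^{2}}$, giving $\left\Vert f_{2}\right\Vert _{\mathcal{H}^{\left(2\right)}}\leq\frac{8\sqrt{3}}{\triangle}$. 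The only point requiring care is that $\rho_{uni}$ could in principle also charge the $z_{k}$, enlarging $p_{k}$; but each $p_{k}$ appears in a denominator, so extra mass only shrinks the norm, and the worst case $p_{k}=\frac{1}{8}$ (when $\rho_{uni}$ is atomless there) produces the stated bound. There is thus no genuine obstacle: the measure was engineered to contain exactly the atoms needed, reducing the proof to this exact reconstruction plus an elementary norm computation.
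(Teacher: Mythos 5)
Your proposal is correct and follows essentially the same route as the paper: exhibit a coefficient function $c$ supported on the three seeded atoms $\left(v,-\beta_{0}\right),\left(v,-\beta_{1/2}\right),\left(v,-\beta_{1}\right)$ so that $\int_{Z}c\,\psi^{(2)}_{z}\,d\rho(z)$ reconstructs the tent combination exactly, then compute $\left\Vert c\right\Vert_{L^{2}(\rho)}$ (the paper takes $c$-values $16/\triangle,-32/\triangle,16/\triangle$ against atom mass $1/8$, which is your construction in the case $p_{k}=1/8$). Your extra observation that possible $\rho_{uni}$-mass on the atoms only increases $p_{k}$ and hence only shrinks the norm is a small but genuine tightening of a point the paper leaves implicit.
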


\begin{proof}
Choose the second layer coefficient $c\in L^{2}\left(\rho\right)$
to be supported on three points: 
\[
c\left(z_{0}\right)=\frac{16}{\triangle},\quad c\left(z_{1/2}\right)=-\frac{32}{\triangle},\quad c\left(z_{1}\right)=\frac{16}{\triangle},
\]
and set $c\equiv0$ elsewhere. Here, $z_{0}=\left(v,-\beta_{0}\right)$,
$z_{1/2}=\left(v,-\beta_{1/2}\right)$, and $z_{1}=\left(v,-\beta_{1}\right)$.

Then, for every $x\in\mathbb{R}^{d}$, 
\begin{eqnarray*}
 &  & \int_{Z}c\left(z\right)\psi^{\left(2\right)}_{z}\left(x\right)d\rho\left(z\right)\\
 & = & \frac{1}{2}\int_{Z}c\left(z\right)\psi^{\left(2\right)}_{z}\left(x\right)d\rho_{spec}\left(z\right)\\
 & = & \frac{2}{\triangle}\sigma\left(K^{\left(1\right)}\left(v,x\right)-\beta_{0}\right)-\frac{4}{\triangle}\sigma\left(K^{\left(1\right)}\left(v,x\right)-\beta_{1/2}\right)+\frac{2}{\triangle}\sigma\left(K^{\left(1\right)}\left(v,x\right)-\beta_{1}\right)\\
 & = & f_{2}\left(x\right).
\end{eqnarray*}
Therefore, $f_{2}\in\mathcal{H}^{\left(2\right)}$, and
\begin{align*}
\left\Vert f_{2}\right\Vert _{\mathcal{H}^{\left(2\right)}} & \leq\left\Vert c\right\Vert _{L^{2}\left(\rho\right)}\\
 & =\left(\frac{1}{8}\left|c\left(z_{0}\right)\right|^{2}+\frac{1}{8}\left|c\left(z_{1/2}\right)\right|^{2}+\frac{1}{8}\left|c\left(z_{1}\right)\right|^{2}\right)^{1/2}=\frac{8\sqrt{3}}{\triangle}.
\end{align*}
(See the definition of $\left\Vert f\right\Vert _{\mathcal{H}^{\left(i\right)}}$
in \eqref{eq:b1}--\eqref{eq:b2}.) 
\end{proof}
\begin{lem}
For any $g\in\mathcal{H}^{\left(1\right)}$, set $q\left(s\right)=g\left(sv\right)$,
$s\in\left[0,1\right]$. Then 
\[
\left|q'\left(s\right)\right|\leq\sigma_{v}\left\Vert g\right\Vert _{\mathcal{H}^{\left(1\right)}}\quad a.e.\:s
\]
for some constant $\sigma_{v}$. 
\end{lem}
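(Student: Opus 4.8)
\emph{Proof sketch.} The plan is to exploit the integral representation \eqref{eq:b1} of functions in $\mathcal{H}^{(1)}$, differentiate under the integral sign along the slice, and control the result by Cauchy--Schwarz together with the second-moment hypothesis \eqref{eq:c1}. Fix $g\in\mathcal{H}^{(1)}$ and let $c\in L^{2}(\rho)$ be a representation, so that $g(x)=\int_{Z}c(z)\,\sigma(\langle a,x\rangle+b)\,d\rho(z)$. Restricting to $x=sv$ gives $q(s)=\int_{Z}c(z)\,\sigma(s\langle a,v\rangle+b)\,d\rho(z)$, and the candidate derivative is
\[
q'(s)=\int_{Z}c(z)\,1_{\{s\langle a,v\rangle+b>0\}}\,\langle a,v\rangle\,d\rho(z).
\]

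The main technical point, and the step requiring the most care, is to justify this differentiation despite the kink of the ReLU. I would form the difference quotient $[q(s+h)-q(s)]/h$ and pass to the limit by dominated convergence. Since $\sigma$ is $1$-Lipschitz, the integrand of the quotient is bounded in modulus by $|c(z)|\,|\langle a,v\rangle|\le|c(z)|\,\|a\|$, which is $\rho$-integrable by Cauchy--Schwarz and \eqref{eq:c1} and is independent of $h$. Pointwise, for every $z$ with $s\langle a,v\rangle+b\neq0$ the quotient converges to $c(z)\,1_{\{s\langle a,v\rangle+b>0\}}\langle a,v\rangle$, and the exceptional kink set $\{z:s\langle a,v\rangle+b=0\}$ is $\rho$-null for a.e.\ $s$: indeed, by Fubini the set $\{(s,z):\langle a,v\rangle\neq0,\ s\langle a,v\rangle+b=0\}$ has a single-point $s$-slice for each fixed $z$, hence is Lebesgue$\times\rho$-null, so its $z$-slice is $\rho$-null for a.e.\ $s$; on the locus $\langle a,v\rangle=0$ both the quotient and the formula vanish identically, so it contributes nothing. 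This establishes the displayed formula for $q'(s)$ for a.e.\ $s\in[0,1]$.

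With the formula in hand the bound is immediate: using $|\langle a,v\rangle|\le\|a\|$ (as $v$ is a unit vector) and Cauchy--Schwarz,
\[
|q'(s)|\le\int_{Z}|c(z)|\,|\langle a,v\rangle|\,d\rho(z)\le\|c\|_{L^{2}(\rho)}\Bigl(\int_{Z}|\langle a,v\rangle|^{2}\,d\rho(z)\Bigr)^{1/2}.
\]
I would set $\sigma_{v}:=\bigl(\int_{Z}|\langle a,v\rangle|^{2}\,d\rho(z)\bigr)^{1/2}$, which is finite since $\int_{Z}\|a\|^{2}\,d\rho<\infty$ by \eqref{eq:c1}. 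Finally, since $q'(s)$ depends only on $g$ while the inequality $|q'(s)|\le\sigma_{v}\|c\|_{L^{2}(\rho)}$ holds for \emph{every} representation $c$ of $g$, taking the infimum over such $c$ (equivalently, using the minimal-norm representation that attains \eqref{eq:b2}) yields $|q'(s)|\le\sigma_{v}\|g\|_{\mathcal{H}^{(1)}}$ for a.e.\ $s$, as claimed.
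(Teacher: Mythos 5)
Your proposal is correct and follows essentially the same route as the paper's own proof: represent $g$ via a coefficient $c\in L^{2}(\rho)$, differentiate under the integral along the slice, bound $|q'(s)|$ by Cauchy--Schwarz with $\sigma_{v}=\bigl(\int_{Z}|\langle a,v\rangle|^{2}\,d\rho\bigr)^{1/2}$, and pass to the infimum over representations. The only difference is that you carefully justify the differentiation step (dominated convergence plus the Fubini argument for the ReLU kink set), which the paper asserts without detail.
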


\begin{proof}
Take any $g\left(x\right)=\int_{Z}c\left(a,b\right)\sigma\left(\left\langle a,x\right\rangle +b\right)d\rho\left(a,b\right)\in\mathcal{H}^{\left(1\right)}$.
Then, 
\[
q'\left(s\right)=\int_{Z}c\left(a,b\right)1_{\left\{ \left\langle a,sv\right\rangle +b>0\right\} }\left\langle a,v\right\rangle d\rho\left(a,b\right).
\]
By Cauchy-Schwarz, 
\[
\left|q'\left(s\right)\right|\leq\underset{=\sigma_{v}}{\underbrace{\left(\int_{Z}\left|\left\langle a,v\right\rangle \right|^{2}d\rho\left(a,b\right)\right)^{1/2}}}\left\Vert c\right\Vert _{L^{2}\left(\rho\right)}\quad a.e.\;s.
\]
Since $\left\Vert g\right\Vert _{\mathcal{H}^{\left(1\right)}}$ is
the infimum of $\left\Vert c\right\Vert _{L^{2}\left(\rho\right)}$
over all such representations, we have 
\[
\left|q'\left(s\right)\right|\leq\sigma_{v}\left\Vert g\right\Vert _{\mathcal{H}^{\left(1\right)}}\quad a.e.\:s\in\left[0,1\right].
\]
\end{proof}
\begin{lem}
Let $h\left(s\right)=f_{2}\left(sv\right)$, $s\in\left[0,1\right]$.
Then, 
\[
\left|h'\left(s\right)\right|\geq\frac{2m_{*}}{\triangle}>0\quad a.e.
\]
where $m_{*}$ is the lower bound of $G'$ from \prettyref{lem:3}. 
\end{lem}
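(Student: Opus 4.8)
The plan is to recognize $h$ as a composition and apply the chain rule away from a null set. By the definitions of $f_{2}$ and $G$,
\[
h(s)=f_{2}(sv)=T\bigl(K^{(1)}(v,sv)\bigr)=T\bigl(G(s)\bigr),\qquad s\in[0,1].
\]
First I would record the regularity of the two factors. The tent function $T$ is piecewise linear, hence Lipschitz, with derivative $T'(u)=2/\triangle$ on $(\beta_{0},\beta_{1/2})$, $T'(u)=-2/\triangle$ on $(\beta_{1/2},\beta_{1})$, and $T'=0$ outside $[\beta_{0},\beta_{1}]$; in particular $|T'(u)|=2/\triangle$ for every $u\in(\beta_{0},\beta_{1})\setminus\{\beta_{1/2}\}$. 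Meanwhile $G$ is Lipschitz on $[0,1]$, since $\sigma$ is $1$-Lipschitz and $\rho$ has finite second moments, so $h=T\circ G$ is absolutely continuous and therefore differentiable a.e.

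The second step is to locate the range of $G$ and to identify where the chain rule is valid. Since $G'\geq m_{*}>0$ a.e. by \prettyref{lem:3}, $G$ is strictly increasing on $[0,1]$, and by the choices $\beta_{0}=G(0)$, $\beta_{1}=G(1)$ it maps $[0,1]$ continuously onto $[\beta_{0},\beta_{1}]$, carrying $(0,1)$ into the open interval $(\beta_{0},\beta_{1})$. Because $G$ is strictly monotone, the preimage $\{s:G(s)=\beta_{1/2}\}$ consists of a single point and is thus null. Consequently, for a.e. $s\in[0,1]$ we have simultaneously that $G$ is differentiable at $s$ with $G'(s)\geq m_{*}$, that $G(s)\in(\beta_{0},\beta_{1})$, and that $G(s)\neq\beta_{1/2}$, so $T$ is differentiable at $G(s)$ with $|T'(G(s))|=2/\triangle$.

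At every such $s$ both factors are differentiable at the relevant points, so the chain rule gives
\[
|h'(s)|=|T'(G(s))|\,G'(s)=\frac{2}{\triangle}\,G'(s)\geq\frac{2}{\triangle}\,m_{*}=\frac{2m_{*}}{\triangle},
\]
which is strictly positive, as claimed. The only genuinely delicate point is the justification of the chain rule across the kinks of $T$: one must discard the null set where $G(s)$ equals a breakpoint of $T$, together with the null set where $G'$ fails to exist or the bound of \prettyref{lem:3} fails. Strict monotonicity of $G$ is precisely what keeps the first of these exceptional sets finite, so their union is still null and the inequality holds a.e.
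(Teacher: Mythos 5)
Your proof is correct and follows essentially the same route as the paper's: both write $h=T\circ G$, use the strict monotonicity of $G$ (from $G'\geq m_{*}$ a.e.) to confine the kink preimage $\{s:G(s)=\beta_{1/2}\}$ to a single point, and then apply the chain rule to get $|h'(s)|=|T'(G(s))|\,G'(s)\geq 2m_{*}/\triangle$ a.e. Your write-up is somewhat more careful than the paper's about justifying the a.e.\ chain rule (Lipschitz regularity, absolute continuity, and the explicit enumeration of the exceptional null sets), but the underlying argument is identical.
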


\begin{proof}
Let
\[
h\left(s\right)\coloneqq f_{2}\left(sv\right)=T\left(K^{\left(1\right)}\left(v,sv\right)\right)=T\left(G\left(s\right)\right),\quad s\in\left[0,1\right].
\]
Since $G$ is increasing, there exists a unique $s_{*}$ where $G\left(s_{*}\right)=\beta_{1/2}$,
and 
\[
h'\left(s\right)=\begin{cases}
\frac{2}{\triangle}G'\left(s\right) & s<s_{*}\\
-\frac{2}{\triangle}G'\left(s\right) & s>s_{*}\\
0 & \text{elsewhere}
\end{cases}\quad a.e.
\]
Using $G'\left(s\right)\geq m_{*}$ a.e., we get that 
\[
\left|h'\left(s\right)\right|\geq\frac{2m_{*}}{\triangle}>0
\]
a.e. on $\left[0,1\right]$.
\end{proof}
\begin{lem}
\label{lem:7}For any $g\in\mathcal{H}^{\left(1\right)}$, set $q\left(s\right)=g\left(sv\right)$,
$s\in\left[0,1\right]$. Let $h\left(s\right)=f_{2}\left(sv\right)$,
$s\in\left[0,1\right]$. If 
\[
\int^{1}_{0}\left|q\left(s\right)-h\left(s\right)\right|^{2}ds\leq\epsilon^{2}
\]
then 
\[
\left\Vert g\right\Vert _{\mathcal{H}^{\left(1\right)}}\geq\frac{1}{\sigma_{v}}\left(\frac{2m_{*}}{\triangle}-\sqrt{48}\epsilon\right)_{+}.
\]
\end{lem}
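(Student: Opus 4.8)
The plan is to turn the comparison around: rather than trying to control $q'$ directly from the $L^{2}$-bound, I will push all derivative information into one-sided slope bounds on the error $w := h - q$ and show that these alone force $\lVert w\rVert_{L^{2}}$ to be large. Write $L := \sigma_{v}\lVert g\rVert_{\mathcal H^{(1)}}$ and $\mu := 2m_{*}/\triangle$. If $L \ge \mu$ there is nothing to prove, since the right-hand side is at most $\mu/\sigma_{v}$; so I assume $L < \mu$ and set $\nu := \mu - L > 0$. The whole lemma then reduces to the single inequality $\nu \le \sqrt{48}\,\epsilon$, which rearranges to the claim after dividing by $\sigma_{v}$ and taking the positive part.

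First I would record the slope geometry of $w$. The lemma on $h$ gives $h'(s) \ge \mu$ for a.e.\ $s \in (0,s_{*})$ and $h'(s) \le -\mu$ for a.e.\ $s \in (s_{*},1)$, where $s_{*}$ is the unique point with $G(s_{*}) = \beta_{1/2}$; the lemma on $q$ gives $\lvert q'\rvert \le L$ a.e. Hence $w' = h' - q' \ge \mu - L = \nu$ on $(0,s_{*})$ and $w' \le -\nu$ on $(s_{*},1)$: the error rises with slope at least $\nu$ and then falls with slope at least $\nu$ in magnitude, so it lies below the tent $w(s_{*}) - \nu\lvert s - s_{*}\rvert$.

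The core step is a lower bound on $\int_{0}^{1} w^{2}$ using only these one-sided constraints. The feasible set $\{w : w' \ge \nu \text{ on } (0,s_{*}),\ w' \le -\nu \text{ on } (s_{*},1)\}$ is closed and convex, and I claim its $L^{2}$-nearest point to the origin is the mean-zero tent $w^{*}(s) = \nu\big(\overline m - \lvert s - s_{*}\rvert\big)$ with $\overline m = \int_{0}^{1}\lvert s - s_{*}\rvert\,ds$: it saturates the slope bounds, and the projection inequality $\langle w^{*}, w - w^{*}\rangle \ge 0$ for feasible $w$ holds because $\eta := w - w^{*}$ is unimodal about $s_{*}$, so after folding about $s_{*}$ its conditional average is a decreasing function of $\lvert s-s_{*}\rvert$ and is therefore positively correlated (Chebyshev) with the decreasing, mean-zero profile $\overline m - \lvert s-s_{*}\rvert$. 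Consequently $\int_{0}^{1} w^{2} \ge \int_{0}^{1} (w^{*})^{2} = \nu^{2}\,\mathrm{Var}_{U[0,1]}\big(\lvert s - s_{*}\rvert\big)$. An elementary one-variable computation gives $\mathrm{Var}_{U[0,1]}(\lvert s-s_{*}\rvert) = (s_{*}^{2}-s_{*}+\tfrac13) - (s_{*}^{2}-s_{*}+\tfrac12)^{2}$, which is minimized over $s_{*}\in[0,1]$ at $s_{*}=\tfrac12$ with value $\tfrac1{12}-\tfrac1{16}=\tfrac1{48}$. Combining with the hypothesis $\int_{0}^{1} w^{2} \le \epsilon^{2}$ yields $\epsilon^{2} \ge \nu^{2}/48$, i.e.\ $\nu \le \sqrt{48}\,\epsilon$, as required.

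The main obstacle is conceptual rather than computational: $L^{2}$-proximity controls neither derivatives nor point values of $q$, so a direct estimate of $q'$ against $h'$ is hopeless, and—since $h'$ carries no upper bound—a naive integration-by-parts test-function argument loses a power and only delivers an $\epsilon^{2/3}$-type rate. The device that recovers the sharp linear rate is exactly the reformulation through one-sided slope bounds on $w$ followed by solving the resulting $L^{2}$-minimization in closed form; the only quantitative subtleties are verifying that the centered tent is the true minimizer and that the least favorable peak location $s_{*}=\tfrac12$ is what produces the constant $\sqrt{48}$. I would also note that $\mu = 2m_{*}/\triangle \le 2$, since $m_{*}\le\int_{0}^{1} G' = \triangle$, so the estimate is informative precisely in the regime $\epsilon < \mu/\sqrt{48}$.
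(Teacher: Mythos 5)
Your reduction to the single inequality $\nu\le\sqrt{48}\,\epsilon$ (with $\nu=2m_*/\triangle-\sigma_v\Vert g\Vert_{\mathcal{H}^{(1)}}$) and your slope bounds $w'\ge\nu$ on $(0,s_*)$, $w'\le-\nu$ on $(s_*,1)$ for $w=h-q$ are correct and match the paper's setup. But the core step fails: the mean-zero tent $w^*(s)=\nu\bigl(\overline{m}-|s-s_*|\bigr)$ is \emph{not} the $L^2$-projection of the origin onto the feasible set unless $s_*=\tfrac12$, and $s_*$ (defined by $G(s_*)=\beta_{1/2}$ for a general increasing $G$) is not $\tfrac12$ in general. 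Concretely, let $\eta_\delta$ be $0$ on $[0,s_*-\delta]$, rise linearly to $1$ on $[s_*-\delta,s_*]$, and equal $1$ on $[s_*,1]$. Then $w^*+t\eta_\delta$ is feasible for every $t>0$ (its slope is $\ge\nu$ on $(0,s_*)$ and exactly $-\nu$ on $(s_*,1)$), while
\[
\langle w^*,\eta_\delta\rangle\;\longrightarrow\;\int_{s_*}^{1}w^*(s)\,ds\;=\;\nu\,\frac{s_*(1-s_*)(2s_*-1)}{2}\;<\;0\qquad(\delta\to0,\ s_*<\tfrac12),
\]
so for small $t,\delta>0$ the feasible function $w^*+t\eta_\delta$ has strictly smaller $L^2$ norm than $w^*$. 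Hence the projection inequality $\langle w^*,w-w^*\rangle\ge0$ is false, and so is your claimed bound $\int_0^1 w^2\ge\nu^2\,\mathrm{Var}(|s-s_*|)$. The flaw in the folding/Chebyshev justification sits at the junction $r=\min(s_*,1-s_*)$: below it the folded average is $\tfrac12\bigl(\eta(s_*+r)+\eta(s_*-r)\bigr)$, above it only the long branch survives, and the average can jump \emph{up} there (e.g.\ when $\eta$ is very negative near $0$ but still near its peak value at $2s_*$), so the folded function need not be nonincreasing.

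The lemma itself survives because the weaker bound $\int_0^1 w^2\ge\nu^2/48$ is still true, but it must be proved the way the paper does: split at $s_*$ and apply the one-sided estimate $\min_{c}\int_I(w-c)^2\ge\nu^2|I|^3/12$ separately on $I=[0,s_*]$ and $I=[s_*,1]$ --- there your Chebyshev/comonotonicity argument is valid, since on each piece $w(s)\mp\nu s$ is monotone --- and then use $s_*^3+(1-s_*)^3\ge\tfrac14$, giving $\tfrac1{12}\cdot\tfrac14=\tfrac1{48}$ with worst case at $s_*=\tfrac12$. It is a coincidence that $\mathrm{Var}(|s-\tfrac12|)=\tfrac1{48}$ as well, which is what makes the global tent claim look plausible; for $s_*\ne\tfrac12$ your claimed constant $\mathrm{Var}(|s-s_*|)$ strictly exceeds the true constrained minimum (the perturbation above already beats it at $s_*=\tfrac14$), so no repair of the projection argument can recover it, and the correct route collapses to the paper's interval-splitting proof.
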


\begin{proof}
On the interval $\left(0,s_{*}\right)$, the target function $h$
has slope $\geq\frac{2m_{*}}{\triangle}$, and $q$ has slope bounded
by $\left|q'\left(s\right)\right|\leq\sigma_{v}\left\Vert g\right\Vert _{\mathcal{H}^{\left(1\right)}}$
a.e. So the difference in slopes between $h$ and $q$ is at least
$\frac{2m_{*}}{\triangle}-\sigma_{v}\left\Vert g\right\Vert _{\mathcal{H}^{\left(1\right)}}$.
A standard one-dimensional calculus estimate (minimizing over additive
constants to allow vertical alignment) yields
\[
\int^{s_{*}}_{0}\left|q-h\right|^{2}\geq\left(\frac{2m_{*}}{\triangle}-\sigma_{v}\left\Vert g\right\Vert _{\mathcal{H}^{\left(1\right)}}\right)^{2}\frac{1}{12}\left|s_{*}\right|^{3}
\]
The same bound holds on $\left[s_{*},1\right]$, so 
\begin{align*}
\int^{1}_{0}\left|q-h\right|^{2} & \geq\frac{1}{12}\left(\frac{2m_{*}}{\triangle}-\sigma_{v}\left\Vert g\right\Vert _{\mathcal{H}^{\left(1\right)}}\right)^{2}\left(\left|s_{*}\right|^{3}+\left|1-s_{*}\right|^{3}\right)\\
 & \geq\frac{1}{48}\left(\frac{2m_{*}}{\triangle}-\sigma_{v}\left\Vert g\right\Vert _{\mathcal{H}^{\left(1\right)}}\right)^{2}.
\end{align*}

Setting $\int^{1}_{0}\left|q-h\right|^{2}\leq\epsilon^{2}$ and solving
for $\left\Vert g\right\Vert _{\mathcal{H}^{\left(1\right)}}$ gives
the assertion. 
\end{proof}
\begin{lem}
\label{lem:c7}Consider approximation error measured on the line $x=sv$,
$s\in\left[0,1\right]$, with $\mu_{v}$ the uniform measure on $\left[0,1\right]$.
For any 
\[
f\left(x\right)=\int_{Z}c\left(z\right)\psi^{\left(l\right)}_{z}\left(x\right)d\rho\left(z\right)\in\mathcal{H}^{\left(l\right)},
\]
draw $z_{1},\dots,z_{N}\overset{i.i.d}{\sim}\rho$ and set the Monte
Carlo (MC) estimator 
\[
\hat{f}_{N}\left(x\right)=\frac{1}{N}\sum^{N}_{j=1}c\left(z_{j}\right)\psi^{\left(l\right)}_{z_{j}}\left(x\right).
\]
Then $\mathbb{E}[\hat{f}_{N}]=f$ and 
\[
\mathbb{E}\left[\left\Vert \hat{f}_{N}-f\right\Vert ^{2}_{L^{2}\left(\mu_{v}\right)}\right]\leq\frac{V_{l}}{N}\left\Vert c\right\Vert ^{2}_{L^{2}\left(\rho\right)},\quad V_{l}\coloneqq\sup_{z}\int^{1}_{0}\psi^{\left(l\right)}_{z}\left(sv\right)^{2}ds<\infty.
\]
\end{lem}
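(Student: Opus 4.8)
The plan is to read this off as a standard Monte Carlo variance estimate: the estimator is an empirical average of i.i.d.\ unbiased terms, so its mean-squared $L^{2}(\mu_{v})$ error equals the integrated pointwise variance, which scales like $1/N$. The only points requiring care are the interchange of expectation with the $s$-integral---handled by Tonelli's theorem, since every integrand is nonnegative---and the finiteness of the constant $V_{l}$.

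First I would verify unbiasedness pointwise. Fix $x\in X$; the summands $c(z_{j})\psi^{(l)}_{z_{j}}(x)$ are i.i.d.\ copies of $c(z)\psi^{(l)}_{z}(x)$ with $z\sim\rho$, so linearity of expectation together with the integral representation \eqref{eq:b1} gives
\[
\mathbb{E}\bigl[\hat{f}_{N}(x)\bigr]=\mathbb{E}\bigl[c(z)\psi^{(l)}_{z}(x)\bigr]=\int_{Z}c(z)\psi^{(l)}_{z}(x)\,d\rho(z)=f(x),
\]
hence $\mathbb{E}[\hat{f}_{N}]=f$. For the error, set $g_{j}(s):=c(z_{j})\psi^{(l)}_{z_{j}}(sv)$, so that $\hat{f}_{N}(sv)-f(sv)=\frac{1}{N}\sum_{j=1}^{N}\bigl(g_{j}(s)-f(sv)\bigr)$. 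Since $|\hat{f}_{N}(sv)-f(sv)|^{2}\geq0$, Tonelli's theorem permits exchanging the expectation with $\int_{0}^{1}(\cdot)\,ds$, yielding
\[
\mathbb{E}\Bigl[\bigl\|\hat{f}_{N}-f\bigr\|^{2}_{L^{2}(\mu_{v})}\Bigr]=\int_{0}^{1}\operatorname{Var}\bigl(\hat{f}_{N}(sv)\bigr)\,ds.
\]

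For each fixed $s$ the variables $g_{1}(s),\dots,g_{N}(s)$ are i.i.d.\ with common mean $f(sv)$, whence $\operatorname{Var}\bigl(\hat{f}_{N}(sv)\bigr)=\frac{1}{N}\operatorname{Var}\bigl(g_{1}(s)\bigr)\leq\frac{1}{N}\mathbb{E}\bigl[g_{1}(s)^{2}\bigr]=\frac{1}{N}\int_{Z}c(z)^{2}\psi^{(l)}_{z}(sv)^{2}\,d\rho(z)$. Substituting and applying Tonelli once more to swap the $ds$ and $d\rho$ integrations gives
\[
\mathbb{E}\Bigl[\bigl\|\hat{f}_{N}-f\bigr\|^{2}_{L^{2}(\mu_{v})}\Bigr]\leq\frac{1}{N}\int_{Z}c(z)^{2}\Bigl(\int_{0}^{1}\psi^{(l)}_{z}(sv)^{2}\,ds\Bigr)d\rho(z)\leq\frac{V_{l}}{N}\int_{Z}c(z)^{2}\,d\rho(z)=\frac{V_{l}}{N}\bigl\|c\bigr\|^{2}_{L^{2}(\rho)},
\]
the last inequality bounding the inner slice-integral by its supremum $V_{l}$. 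The one genuinely nontrivial ingredient is the claim $V_{l}<\infty$, and this is the main thing to pin down. In the general framework of \prettyref{sec:2}, boundedness of the base function $K$ forces $|\psi^{(l)}_{z}(x)|=|K(\cdots)|\leq\|K\|_{\infty}$ uniformly in $z$, so that $\int_{0}^{1}\psi^{(l)}_{z}(sv)^{2}\,ds\leq\|K\|_{\infty}^{2}$ and therefore $V_{l}\leq\|K\|_{\infty}^{2}<\infty$; where this uniform control fails (e.g.\ the unbounded ReLU slice) the bound should be read with the understanding that only the weighted integral $\int_{Z}c^{2}\bigl(\int_{0}^{1}\psi^{2}\bigr)d\rho$ need be finite, which holds directly for the finitely supported coefficient $c$ appearing in the two-layer construction since it collapses to a finite sum.
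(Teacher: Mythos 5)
Your core computation is correct and is essentially the argument the paper relies on: the paper's own proof is a one-sentence citation (``a variant of Theorem 9'' of \cite{tian2025ridgekernelaveraginguniform}, details omitted), and what that standard result consists of is precisely your chain --- pointwise unbiasedness, Tonelli to reduce the mean-squared $L^{2}(\mu_{v})$ error to an integrated variance, the i.i.d.\ variance factor $1/N$, the bound $\operatorname{Var}\le\mathbb{E}[\,\cdot^{2}\,]$, and a final Tonelli swap to pull out $\sup_{z}\int_{0}^{1}\psi^{(l)}_{z}(sv)^{2}\,ds$.

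The genuine divergence is in how $V_{l}<\infty$ is justified, and here your version is more defensible than the paper's. The paper asserts that finiteness ``follows from the moment assumption \eqref{eq:c1}''; but \eqref{eq:c1} is a second-moment condition on $\rho$ and cannot control a supremum over $z$. Concretely, in the ReLU setting of \prettyref{sec:3} where this lemma is stated, taking $z=(a,b)=(0,b)$ gives $\int_{0}^{1}\psi^{(1)}_{z}(sv)^{2}\,ds=b^{2}$, so $V_{1}=\infty$ whenever $b$ is unbounded on the support of $\rho$ --- which is permitted, since $\rho_{uni}$ is only assumed to have finite second moments. Your diagnosis --- that finiteness of $V_{l}$ really comes from boundedness of $K$, as in \prettyref{sec:4} under (A1) where indeed $V_{l}\le\|K\|_{\infty}^{2}$, and that for unbounded ReLU one should instead retain the weighted bound $\frac{1}{N}\int_{Z}c(z)^{2}\bigl(\int_{0}^{1}\psi^{(l)}_{z}(sv)^{2}\,ds\bigr)\,d\rho(z)$ --- is the correct repair. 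One caveat on your closing remark: the weighted-integral reading suffices for the deep half of \prettyref{thm:7}, where $c$ is supported on finitely many atoms, but the shallow half of that theorem applies the lemma with the $z$-uniform constant $V_{1}$ to an arbitrary $g\in\mathcal{H}^{(1)}$, so there a genuinely uniform constant (or a bounded-support assumption on $\rho$) is needed. In short, your proof is sound, and the weakness you flagged lies in the paper's statement and its one-line justification, not in your argument.
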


\begin{proof}
This is a variant of Theorem 9 in \cite{tian2025ridgekernelaveraginguniform}.
The finiteness of $V_{l}$ follows from the moment assumption \eqref{eq:c1}.
Details are omitted here. 
\end{proof}
Applying \prettyref{lem:c7} to our two cases on the line $\left[0,1\right]$,
we get:
\begin{thm}[Depth separation in MC sample complexity]
\label{thm:7}Let the approximation error be measured on the slice
$x=sv$, $s\in\left[0,1\right]$, with respect to the uniform measure
$\mu_{v}$. Given $\epsilon>0$, let 
\begin{itemize}
\item $N_{deep}\left(\epsilon\right)$: the minimal number of MC samples
in the last layer of a two-layer representation required to achieve
squared $L^{2}\left(\mu_{v}\right)$ error $\leq\epsilon^{2}$.
\item $N_{shallow}\left(\epsilon\right)$: the analogous minimal number
of MC samples when restricted to one-layer representations.
\end{itemize}
Then there exists a function $f_{2}\in\mathcal{H}^{\left(2\right)}$,
such that, for sufficiently small $\epsilon$, 
\[
\frac{N_{shallow}\left(\epsilon\right)}{N_{deep}\left(\epsilon\right)}\sim m^{2}_{*}\rightarrow\infty
\]
where $m_{*}$ is a tunable construction parameter. In particular,
this yields a quantitatively controllable efficiency gain for two-layer
over one-layer representations. 

\end{thm}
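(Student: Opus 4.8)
The plan is to bound $N_{deep}$ from above and $N_{shallow}$ from below and then form the quotient, where a common factor $\triangle^{-2}$ cancels and only $m_*^2$ survives. For the deep side I would insert the exact three-point representation of $f_2$ (the coefficient $c$ supported on $z_0,z_{1/2},z_1$, with $\|c\|_{L^2(\rho)}^2 = 192/\triangle^2$) into \prettyref{lem:c7}. Since that representation is exact, the estimator $\hat f_N$ is unbiased and incurs no approximation error, so $\mathbb{E}\|\hat f_N - f_2\|_{L^2(\mu_v)}^2 \le (V_2/N)\,(192/\triangle^2)$; requiring the right-hand side to be at most $\epsilon^2$ yields $N_{deep}(\epsilon) \le 192\,V_2/(\triangle^2\epsilon^2)$.

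For the shallow side I would separate approximation error from sampling error. Fix any one-layer target $g$ with representation $c_g$ and slice restriction $q(s)=g(sv)$. Because $\hat g_N$ is unbiased (\prettyref{lem:c7}), the bias--variance identity gives $\mathbb{E}\|\hat g_N - f_2\|_{L^2(\mu_v)}^2 = \mathbb{E}\|\hat g_N - g\|_{L^2(\mu_v)}^2 + \|g-f_2\|_{L^2(\mu_v)}^2 \ge \|g-f_2\|_{L^2(\mu_v)}^2$, so any estimator meeting the target error must satisfy $\int_0^1 |q-h|^2\,ds \le \epsilon^2$. \prettyref{lem:7} then forces $\|g\|_{\mathcal H^{(1)}} \ge \sigma_v^{-1}(2m_*/\triangle - \sqrt{48}\,\epsilon)_+$, which for fixed sufficiently small $\epsilon$ is bounded below by a constant multiple of $m_*/(\sigma_v\triangle)$. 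Returning $\|c_g\|_{L^2(\rho)}^2 \ge \|g\|_{\mathcal H^{(1)}}^2$ to the variance formula of \prettyref{lem:c7} gives $N_{shallow}(\epsilon) \gtrsim (V_1/\epsilon^2)\,\|g\|_{\mathcal H^{(1)}}^2 \gtrsim 4\,V_1\,m_*^2/(\sigma_v^2\triangle^2\epsilon^2)$.

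Forming the ratio, the factors $\triangle^{-2}$ and $\epsilon^{-2}$ appear in both $N_{deep}$ and $N_{shallow}$ and cancel, leaving $N_{shallow}(\epsilon)/N_{deep}(\epsilon) \gtrsim V_1\,m_*^2/(48\,\sigma_v^2 V_2)$. This is the crux of the separation: the deep coefficient norm shrinks like $\triangle^{-2}$ while the \emph{forced} shallow norm grows like $(m_*/\triangle)^2$, so the $\triangle$-dependence (which itself grows with the construction parameter $B$, since $m_* \le \triangle \le \tfrac18(1+B)+M$) is common to both layers and drops out, isolating the pure $m_*^2$ gain. Choosing $B$, hence $m_*$, arbitrarily large then drives the ratio to infinity.

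The step I expect to be the main obstacle is making the shallow estimate a genuine lower bound on the \emph{number of samples} rather than merely on the norm: \prettyref{lem:c7} supplies only the variance \emph{upper} bound $(V_1/N)\|c_g\|^2$, so converting ``large norm'' into ``large $N$'' rigorously requires either a matching per-sample variance lower bound (ruling out cancellation among atoms that could make the true variance smaller than $\|c_g\|^2$ suggests) or an explicit convention that $N_{shallow}$ is the norm-controlled sample count standard in the random-feature literature. A secondary technical point is to confirm that $V_1$, $V_2$ and $\sigma_v$ remain bounded as $m_*\to\infty$; in particular the suprema defining $V_l$ must be read over the support of the operative coefficient and not over the spectral atom $(v,B)$, whose squared feature mass scales with a power of $B\asymp m_*$ and would otherwise contaminate the cancellation.
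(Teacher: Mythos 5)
Your proposal reconstructs the paper's own proof essentially step for step: the deep upper bound inserts the three-atom coefficient with $\left\Vert c\right\Vert _{L^{2}\left(\rho\right)}^{2}=192/\triangle^{2}$ into \prettyref{lem:c7}; the shallow lower bound combines the bias--variance decomposition with \prettyref{lem:7}; and the ratio is formed by the same cancellation of $\triangle^{-2}$ and $\epsilon^{-2}$ (your $\sqrt{48}$ versus the paper's $\sqrt{24}$ reflects only a different split of the error budget). So as a reconstruction it is accurate, and the first obstacle you flag is resolved in the paper exactly by the convention you anticipate: the closing paragraphs of the paper's proof state that the norm lower bound ``turns the same variance inequality into a necessary condition on $N$,'' i.e.\ $N_{\mathrm{shallow}}$ is the norm-controlled sample count standard in the random-feature literature, and no per-sample variance lower bound is attempted.

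Your ``secondary technical point,'' however, is not addressed in the paper at all --- the proof simply declares $V_{1},V_{2},\sigma_{v}$ to be ``fixed constants'' --- and it is in fact the most serious unresolved issue in both arguments. The atom $\left(v,B\right)$ lies in the support of $\rho$, so under the plain reading of \prettyref{lem:c7} one has $V_{1}\geq\int_{0}^{1}\left(s+B\right)^{2}ds\asymp B^{2}\asymp m_{*}^{2}$, and since $G\left(s\right)\asymp B^{2}$ on the slice, $V_{2}\gtrsim B^{4}$; the factor $V_{1}/V_{2}$ can therefore decay like $m_{*}^{-2}$ and cancel the claimed $m_{*}^{2}$ gain. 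Restricting the supremum to the support of the operative coefficient, as you suggest, does repair the deep bound (on the three atoms $\psi_{z}^{\left(2\right)}\left(sv\right)\leq\triangle$, giving $N_{\mathrm{deep}}\lesssim\epsilon^{-2}$), but the shallow coefficient is adversarial, so the same restriction cannot be imposed on $V_{1}$ without a further argument that the paper does not supply. Note finally that your own inequality $m_{*}\leq\triangle$ shows the ``forced'' shallow norm $2m_{*}/\left(\sigma_{v}\triangle\right)$ is bounded by $2/\sigma_{v}$: the target $h$ is not steep at all, so contrary to the narrative of ``norm inflation,'' the entire separation rests on this $V_{1}/V_{2}$ bookkeeping, which neither your proposal nor the paper settles.
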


\begin{proof}
From the previous step we built $f_{2}$ with coefficient $c$ supported
on three atoms and found 
\[
\left\Vert f_{2}\right\Vert _{\mathcal{H}^{\left(2\right)}}\leq\left\Vert c\right\Vert _{L^{2}\left(\rho\right)}=\frac{8\sqrt{3}}{\triangle}.
\]
Therefore, for its last-layer MC approximation $\hat{f}_{N}$ at depth
2, 
\[
\mathbb{E}\left[\left\Vert \hat{f}_{N}-f_{2}\right\Vert ^{2}_{L^{2}\left(\mu_{v}\right)}\right]\leq\frac{V_{2}}{N}\left\Vert c\right\Vert ^{2}_{L^{2}\left(\rho\right)}\lesssim\frac{V_{2}}{N\triangle^{2}}.
\]
To reach error $\leq\epsilon^{2}$, it suffices that $N\gtrsim\frac{V_{2}}{\triangle^{2}\epsilon^{2}}$,
and therefore, 
\[
N_{\text{deep}}\left(\epsilon\right)\lesssim\frac{V_{2}}{\triangle^{2}\epsilon^{2}}.
\]

For any depth-1 candidate $g\in\mathcal{H}^{\left(1\right)}$, let
$\hat{g}_{N}$ be its MC estimator, then 
\begin{align*}
\mathbb{E}\left[\left\Vert \hat{g}_{N}-f_{2}\right\Vert ^{2}_{L^{2}\left(\mu_{v}\right)}\right] & =\mathbb{E}\left[\left\Vert \hat{g}_{N}-g\right\Vert ^{2}_{L^{2}\left(\mu_{v}\right)}\right]+\left\Vert g-f_{2}\right\Vert ^{2}_{L^{2}\left(\mu_{v}\right)}\\
 & \leq\frac{V_{1}}{N}\left\Vert g\right\Vert ^{2}_{\mathcal{H}^{1}}+\left\Vert g-f_{2}\right\Vert ^{2}_{L^{2}\left(\mu_{v}\right)}.
\end{align*}
Setting each term $\leq\epsilon^{2}/2$, \prettyref{lem:7} implies
\[
\left\Vert g\right\Vert ^{2}_{\mathcal{H}^{1}}\geq\left(\frac{1}{\sigma_{v}}\left(\frac{2m_{*}}{\triangle}-\sqrt{24}\epsilon\right)_{+}\right)^{2}.
\]
Plugging this into $\frac{V_{1}}{N}\left\Vert g\right\Vert ^{2}_{\mathcal{H}^{1}}\leq\epsilon^{2}/2$
gives the shallow sample-complexity lower bound
\[
N_{\text{shallow}}\left(\epsilon\right)\geq\frac{2V_{1}}{\epsilon^{2}}\left(\frac{1}{\sigma_{v}}\left(\frac{2m_{*}}{\triangle}-\sqrt{24}\epsilon\right)_{+}\right)^{2}.
\]
In particular, for sufficiently small $\epsilon$, 
\[
N_{\text{shallow}}\left(\epsilon\right)\apprge\frac{V_{1}}{\epsilon^{2}}\left(\frac{m_{*}}{\sigma_{v}\triangle}\right)^{2}.
\]
Thus, 
\[
\frac{N_{\text{shallow}}\left(\epsilon\right)}{N_{\text{deep}}\left(\epsilon\right)}\apprge\frac{V_{1}}{V_{2}}\frac{m^{2}_{*}}{\sigma^{2}_{v}}\sim m^{2}_{*}
\]
since $V_{1},V_{2},\sigma^{2}_{v}$ are fixed constants. Finally,
$m_{*}=\frac{1}{8}\left(1+B\right)-M\sim B$ and $B$ can be taken
to be arbitrary large, so the ratio can be arbitrarily large.

The key point is that the Monte Carlo variance bound 
\[
\mathbb{E}\left[\left\Vert \hat{f}_{N}-f\right\Vert ^{2}_{L^{2}\left(\mu_{v}\right)}\right]\leq\frac{V_{l}}{N}\left\Vert c\right\Vert ^{2}_{L^{2}(\rho)}
\]
always provides a sufficient sample size once a representation with
known coefficient norm is fixed.

In the deep case we explicitly construct $f_{2}\in\mathcal{H}^{(2)}$
using a single outer atom, so that $\left\Vert c\right\Vert _{L^{2}(\rho)}$
is known and bounded. This yields an upper bound on $N_{\text{deep}}\left(\epsilon\right)$.
In contrast, for shallow networks we cannot exhibit a good low-norm
representation of $f_{2}$. Instead, \prettyref{lem:7} and its consequences
force any shallow approximant $g$ to have a large RKHS norm, which
turns the same variance inequality into a necessary condition on $N$.
Thus we obtain a lower bound on $N_{\mathrm{shallow}}\left(\epsilon\right)$.
The depth separation arises precisely from this asymmetry: constructive
efficiency at depth-2 versus unavoidable norm inflation at depth-1.
\end{proof}
\begin{rem}[Comparison with classical ReLU networks]
The phenomenon established above is directly analogous to the well-known
depth separation results for ReLU networks. It has long been observed
that shallow ReLU networks are universal approximators, but may require
exponentially many hidden units to approximate certain functions that
deeper networks realize with polynomial size. For instance, Telgarsky
\cite{pmlr-v49-telgarsky16} constructed one-dimensional oscillatory
functions (iterated sawtooths) that can be represented exactly by
depth-$O(L)$ ReLU networks of constant width, but whose approximation
by any depth-2 ReLU network requires width exponential in $L$. Eldan
and Shamir \cite{68fe7375fd5f4ff7b454113df5dc9d97} provided a higher-dimensional
counterpart, showing that there exists a radial function in $\mathbb{R}^{d}$
realizable by a depth-3 ReLU network with polynomially many units,
but requiring super-polynomial width for any depth-2 network. In both
cases, depth does not enlarge the class of approximable functions
but yields dramatic improvements in the size/complexity of the representation.

Our construction shows that the same phenomenon arises in the random-kernel
model. Here the role of “width” is played by the number of random
features drawn in the outermost layer, or equivalently by the coefficient
norm $\|\alpha\|$ in the Monte Carlo approximation. A depth-$l$
kernel network can realize certain compositional functions with substantially
smaller outer-layer coefficients than any depth-1 kernel network,
leading to provable gains in the number of random features required.
Thus, while the precise efficiency metric differs from the parameter-counting
arguments in the ReLU literature, the underlying principle is the
same: depth enables exponential or polynomial savings by matching
the compositional structure of the target function.
\end{rem}

\section{Depth Separation for Random-Kernel Networks}\label{sec:4}

We now state and prove a general depth separation theorem for random-kernel
networks, showing that multi-layer constructions can be strictly more
efficient than single-layer ones in terms of Monte Carlo sample complexity.

Assumptions:
\begin{itemize}
\item[(A1)] $K\colon\mathbb{R}\times\mathbb{R}\rightarrow\mathbb{R}$ is measurable,
bounded, and Lipschitz in its first argument, uniformly in the second,
i.e., 
\begin{equation}
\left|K\left(s,t\right)-K\left(s',t\right)\right|\leq L_{K}\left|s-s'\right|\label{eq:d1}
\end{equation}
for all $s,s',t$, and for some constant $L_{K}$.
\item[(A2)] Parameter space $Z=\mathbb{R}^{d}\times\mathbb{R}\times\mathbb{R}$
with generic element $z=\left(a,b,t\right)$, and $\rho$ is a probability
measure on $Z$ such that 
\begin{equation}
\int_{Z}\left\Vert a\right\Vert ^{2}d\rho\left(a,b,t\right)<\infty.\label{eq:d2}
\end{equation}
(No further moment condition on $b,t$ is needed since $K$ is bounded.)
\item[(A3)] There exists $t_{+}\in\mathbb{R}$, an open interval $\left(u_{0},u_{1}\right)$
of $\mathbb{R}$, and constants $c_{1},c_{2}>0$ such that for a.e.
$u\in\left(u_{0},u_{1}\right)$, 
\[
K\left(u,t_{+}\right)\geq c_{1},\quad\partial_{1}K\left(u,t_{+}\right)\geq c_{2}.
\]
(That is, there is a short interval where $K$ is positive and has
a positive slope. This holds for many bounded Lipschitz generators.) 
\end{itemize}
Using these, the layered kernels $K^{\left(l\right)}$ and RKHSs $\mathcal{H}^{\left(l\right)}$
are well defined exactly as in \prettyref{lem:1} (boundedness of
$K$ makes all layer integrals finite).

Fix a unit vector $v\in\mathbb{R}^{d}$. Define the constant 
\begin{equation}
\sigma_{v}\coloneqq\left(\int_{Z}\left|\left\langle a,v\right\rangle \right|^{2}d\rho\left(a,b,t\right)\right)^{1/2}<\infty.\label{eq:d3}
\end{equation}

\begin{lem}
\label{lem:d-1}For any $g\in\mathcal{H}^{\left(1\right)}$ and $q\left(s\right)\coloneqq g\left(sv\right)$,
$s\in\left[0,1\right]$, one has 
\[
\left|q'\left(s\right)\right|\leq L_{K}\sigma_{v}\left\Vert g\right\Vert _{\mathcal{H}^{\left(1\right)}}\quad a.e.\:s.
\]
\end{lem}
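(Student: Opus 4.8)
The plan is to mirror the slice-derivative estimate established for the ReLU kernel in \prettyref{sec:3}, replacing the ReLU-specific computation of $q'$ by the general Lipschitz hypothesis (A1). I start from an arbitrary representation $g(x)=\int_Z c(z)\,K(\langle a,x\rangle+b,t)\,d\rho(z)$ with $c\in L^2(\rho)$, as afforded by \eqref{eq:b1}. Restricting to the slice $x=sv$ gives $q(s)=\int_Z c(z)\,K(s\langle a,v\rangle+b,t)\,d\rho(z)$, so that the entire $s$-dependence sits inside the first argument of $K$ through the scalar $s\langle a,v\rangle$. This is the structural observation that lets the one-dimensional Lipschitz bound from (A1) do all the work.

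The key step is to control the increments of $q$ directly, rather than differentiating under the integral sign. For $s,s'\in[0,1]$, subtracting the two integral representations and applying \eqref{eq:d1} pointwise in $z$ yields $|K(s\langle a,v\rangle+b,t)-K(s'\langle a,v\rangle+b,t)|\le L_K\,|s-s'|\,|\langle a,v\rangle|$. Integrating against $|c(z)|$ and invoking Cauchy--Schwarz together with the definition \eqref{eq:d3} of $\sigma_v$ gives $|q(s)-q(s')|\le L_K\,\sigma_v\,\|c\|_{L^2(\rho)}\,|s-s'|$. Hence $q$ is Lipschitz on $[0,1]$ with constant $L_K\,\sigma_v\,\|c\|_{L^2(\rho)}$; taking the infimum over all admissible representations of $g$ replaces $\|c\|_{L^2(\rho)}$ by $\|g\|_{\mathcal{H}^{(1)}}$, per \eqref{eq:b2}. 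Since a Lipschitz function is differentiable a.e.\ with $|q'|$ bounded by its Lipschitz constant, the claimed estimate $|q'(s)|\le L_K\,\sigma_v\,\|g\|_{\mathcal{H}^{(1)}}$ follows for a.e.\ $s$.

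The main obstacle is purely measure-theoretic, and the reason I would route the argument through the Lipschitz constant of $q$ rather than through an explicit formula for $q'$. If one differentiates under the integral, the set of $s$ for which $\partial_1 K(s\langle a,v\rangle+b,t)$ fails to exist depends on $z$, so one cannot directly assert the pointwise identity $q'(s)=\int_Z c(z)\,\partial_1 K(s\langle a,v\rangle+b,t)\,\langle a,v\rangle\,d\rho(z)$ on a single null set of $s$ valid for all $z$ simultaneously. The difference-quotient formulation sidesteps this entirely, because (A1) is a genuinely everywhere inequality in the first argument and never requires $\partial_1 K$ to exist. This makes the bound robust under the weaker hypothesis (A1), in contrast to the ReLU case where $\partial_1 K$ was the explicit indicator $\mathbf{1}_{\{\,\cdot\,>0\}}$.
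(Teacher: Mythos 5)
Your proposal is correct and follows essentially the same route as the paper's proof: bound the increments $|q(s)-q(s')|$ via the Lipschitz hypothesis (A1) and Cauchy--Schwarz to get the Lipschitz constant $L_K\sigma_v\|c\|_{L^2(\rho)}$, invoke Rademacher's theorem, and pass to the infimum over representations (the paper phrases this as choosing a near-optimal $c$ with $\|c\|_{L^2(\rho)}\le\|g\|_{\mathcal{H}^{(1)}}+\eta$ and letting $\eta\to 0$, which is equivalent). Your closing observation about avoiding differentiation under the integral sign is precisely the reason the paper also works with difference quotients rather than a pointwise formula for $q'$.
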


\begin{proof}
By definition of $\mathcal{H}^{\left(1\right)}$ (see \eqref{eq:b1}--\eqref{eq:b2}),
for every $\eta>0$ there exists $c\in L^{2}\left(\rho\right)$ with
$\left\Vert c\right\Vert _{L^{2}\left(\rho\right)}\leq\left\Vert g\right\Vert _{\mathcal{H}^{\left(1\right)}}+\eta$
and 
\[
g\left(x\right)=\int_{Z}c\left(z\right)K\left(\left\langle a,x\right\rangle +b,t\right)d\rho\left(z\right).
\]
Hence along the slice $x=sv$, 
\[
q\left(s\right)=\int_{Z}c\left(z\right)K\left(\left\langle a,sv\right\rangle +b,t\right)d\rho\left(z\right).
\]
For any $s,s'$, using the Lipschitz assumption \eqref{eq:d1} and
Cauchy-Schwarz, 
\begin{align*}
\left|q\left(s\right)-q\left(s'\right)\right| & \leq L_{K}\left|s-s'\right|\int_{Z}\left|c\left(z\right)\right|\left|\left\langle a,v\right\rangle \right|d\rho\left(z\right)\\
 & \leq L_{K}\left|s-s'\right|\left(\int_{Z}\left|c\left(z\right)\right|^{2}d\rho\left(z\right)\right)^{1/2}\underset{=\sigma_{v}\:\eqref{eq:d3}}{\underbrace{\left(\int_{Z}\left|\left\langle a,v\right\rangle \right|^{2}d\rho\left(z\right)\right)^{1/2}}}\\
 & =L_{K}\left\Vert c\right\Vert _{L^{2}\left(\rho\right)}\sigma_{v}\left|s-s'\right|.
\end{align*}
Thus $q$ is Lipschitz with constant $\leq L_{K}\sigma_{v}\left\Vert c\right\Vert _{L^{2}\left(\rho\right)}$.
By Rademacher's theorem, $q'$ exists a.e. and 
\[
\left|q'\left(s\right)\right|\leq L_{K}\sigma_{v}\left\Vert c\right\Vert _{L^{2}\left(\rho\right)}\quad a.e.\;s.
\]
Taking $\eta\rightarrow0$ yields the stated bound with $\left\Vert g\right\Vert _{\mathcal{H}^{\left(1\right)}}$.
\end{proof}
For the fixed unit vector $v\in\mathbb{R}^{d}$, define the mixed
sampling measure 
\[
\rho=\frac{1}{2}\rho_{uni}+\frac{1}{2}\rho_{spec},
\]
where $\rho_{uni}$ is any probability measure with $\int\left\Vert a\right\Vert ^{2}d\rho_{uni}<\infty$
(as in \eqref{eq:d2}), and $\rho_{spec}$ is atomic along the direction
$v$: 
\[
\rho_{spec}=w_{B}\delta_{\left(v,B,t_{+}\right)}+w_{0}\delta_{\left(v,-\beta_{0},t_{+}\right)}+w_{1/2}\delta_{\left(v,-\beta_{1/2},t_{+}\right)}+w_{1}\delta_{\left(v,-\beta_{1},t_{+}\right)},
\]
with $B>0$, $w_{B},w_{0},w_{1/2},w_{1}\geq0$, $w_{B}+w_{0}+w_{1/2}+w_{1}=1$,
and $\beta_{0},\beta_{1/2},\beta_{1}$ to be determined below. 

Set 
\begin{align*}
G\left(s\right) & \coloneqq K^{\left(1\right)}\left(v,sv\right)\\
 & =\int_{Z}K\left(\left\langle a,v\right\rangle +b,t\right)K\left(\left\langle a,sv\right\rangle +b,t\right)d\rho\left(a,b,t\right),\quad s\in\left[0,1\right].
\end{align*}

\begin{lem}
\label{lem:4-2}Under assumption (A3), there exists $m_{*}>0$ such
that 
\[
G'\left(s\right)\geq m_{*}
\]
for a.e. $s\in\left[0,1\right]$, provided a constant $B$ is chosen
so that $s+B$, $1+B\in\left(u_{0},u_{1}\right)$ for all $s\in\left[0,1\right]$.
In particular, $G$ is strictly increasing and 
\[
\triangle\coloneqq G\left(1\right)-G\left(0\right)\geq m_{*}>0.
\]
\end{lem}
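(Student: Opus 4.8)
The plan is to mirror the ReLU computation in \prettyref{lem:3}, replacing the indicator $1_{\{\cdot>0\}}$ (the a.e.\ derivative of $\sigma$) by the genuine partial derivative $\partial_1 K$, and replacing the explicit linear growth of ReLU by the qualitative positivity/slope guarantee of (A3). The target is a pointwise lower bound $G'(s)\geq m_*>0$ for a.e.\ $s\in[0,1]$, after which strict monotonicity and $\triangle\geq m_*$ follow immediately by integration.

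First I would justify differentiating $G$ under the integral sign. Since $K$ is Lipschitz in its first argument by (A1) and linear maps are Lipschitz, for each fixed $(a,b,t)$ the map $s\mapsto K(s\langle a,v\rangle+b,t)$ is Lipschitz with constant $L_K|\langle a,v\rangle|$, hence absolutely continuous. Writing $c\coloneqq\langle a,v\rangle$, applying the fundamental theorem of calculus inside the integral and then Fubini (legitimate because $|K|\leq\|K\|_\infty$, $|\partial_1 K|\leq L_K$ a.e., and $\int|c|\,d\rho<\infty$ by Cauchy--Schwarz with \eqref{eq:d3}), I obtain $G(s)=G(0)+\int_0^s g(u)\,du$ with
\[
g(u)=\int_Z K(\langle a,v\rangle+b,t)\,\partial_1 K(u\langle a,v\rangle+b,t)\,\langle a,v\rangle\,d\rho .
\]
This identifies $G'=g$ a.e.\ and simultaneously shows $G$ is Lipschitz, so the ``in particular'' clause reduces to $\triangle=\int_0^1 G'\geq m_*$ once the pointwise bound is in hand.

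Next I would split $G'(s)=I_{uni}+I_{spec}$ along $\rho=\tfrac12\rho_{uni}+\tfrac12\rho_{spec}$. The uniform part is bounded below by $-M$ with $M\coloneqq\tfrac12\|K\|_\infty L_K\int|\langle a,v\rangle|\,d\rho_{uni}<\infty$, using only boundedness, the Lipschitz bound on $\partial_1 K$, and the finite second moment \eqref{eq:d2}; crucially $M$ can be made small by choosing $\rho_{uni}$ with $\langle a,v\rangle$ concentrated near $0$. In $I_{spec}$ the principal atom $(v,B,t_+)$ contributes, since $\langle v,v\rangle=1$, the term $\tfrac12 w_B\,K(1+B,t_+)\,\partial_1 K(s+B,t_+)$; the interval hypothesis $s+B,\,1+B\in(u_0,u_1)$ together with (A3) forces this to be $\geq\tfrac12 w_B c_1 c_2>0$ uniformly in $s\in[0,1]$. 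The three auxiliary atoms at $(v,-\beta_j,t_+)$ also sit at $a=v$, so each contributes a term of magnitude at most $\tfrac12 w_j\|K\|_\infty L_K$; collecting everything gives
\[
G'(s)\geq\tfrac12 w_B c_1 c_2-\tfrac12(w_0+w_{1/2}+w_1)\|K\|_\infty L_K-M\eqqcolon m_*,
\]
which is made strictly positive by taking $w_B$ close to $1$, the auxiliary weights small, and $\rho_{uni}$ so that $M$ is small.

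The main obstacle is the self-referential nature of the auxiliary atoms: their locations $\beta_0,\beta_{1/2},\beta_1$ are defined through $G$ (as in \prettyref{sec:3}), yet they belong to $\rho_{spec}$ and hence enter $G$ itself. In the ReLU case this was benign, because both $\sigma$-factors vanish for $s\in[0,1]$ when $\beta_j>1$, decoupling $G$ from the $\beta_j$; for a general bounded Lipschitz $K$ no such exact cancellation is available, so I must instead treat the auxiliary contributions as bounded error terms and ensure the principal atom dominates them, which is exactly why the weight budget and the choice of $\rho_{uni}$ (shrinking both $M$ and the auxiliary terms) carry the argument. Establishing this domination uniformly in $s$, rather than merely at a single point, is the delicate step.
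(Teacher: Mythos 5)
Your proof is correct and takes essentially the same route as the paper's: differentiate $G$ under the integral, split $G'$ along $\rho=\tfrac12\rho_{uni}+\tfrac12\rho_{spec}$, bound the uniform part below by a finite constant $-M$, extract the positive contribution $\tfrac12 w_B c_1 c_2$ from the principal atom $(v,B,t_+)$ via (A3) and the interval hypothesis, bound the three auxiliary atoms by $\tfrac12(w_0+w_{1/2}+w_1)\|K\|_\infty L_K$ uniformly in their (as yet undetermined) locations, and choose the weights and $\rho_{uni}$ so that $m_*>0$, with $\triangle\geq m_*$ following by integration. Your FTC/Fubini justification of the differentiation step and your explicit flagging of the apparent circularity in the $\beta_j$'s (resolved because the auxiliary bound is independent of their values) are somewhat more careful than the paper's brief appeal to dominated convergence and Rademacher, but the substance is identical.
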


\begin{proof}
By dominated convergence (bounded $K$) and Rademacher, $G$ is a.e.
differentiable with 
\[
G'\left(s\right)=\int_{Z}K\left(\left\langle a,v\right\rangle +b,t\right)\partial_{1}K\left(\left\langle a,sv\right\rangle +b,t\right)\left\langle a,v\right\rangle d\rho\left(a,b,t\right)
\]
for a.e. $s\in\left[0,1\right]$. Split $\rho=\frac{1}{2}\rho_{uni}+\frac{1}{2}\rho_{spec}$.
Then 
\[
G'\left(s\right)=I_{uni}\left(s\right)+I_{spec}\left(s\right),
\]
with the uniform part
\[
I_{uni}\left(s\right)=\frac{1}{2}\int_{Z}K\left(\left\langle a,v\right\rangle +b,t\right)\partial_{1}K\left(\left\langle a,sv\right\rangle +b,t\right)\left\langle a,v\right\rangle d\rho_{uni}\left(a,b,t\right).
\]
Using boundedness $\left|K\right|\leq\left\Vert K\right\Vert _{\infty}$,
$\left|\partial_{1}K\right|\leq L_{K}$, and Cauchy-Schwarz, 
\[
I_{uni}\left(s\right)\geq-\frac{1}{2}\left\Vert K\right\Vert _{\infty}L_{K}\left(\int\left|\left\langle a,v\right\rangle \right|^{2}d\rho_{uni}\right)^{1/2}\eqqcolon-M,
\]
a finite constant (independent of $s$).

For the atomic part $I_{spec}$, the atom $\left(a,b,t\right)=\left(v,B,t_{+}\right)$
contributes positively to the derivative uniformly in $s$, because
$\left\langle v,v\right\rangle =1$ and by the choice of $B$ we have
\begin{align*}
\left(\left\langle v,v\right\rangle +B,t_{+}\right) & =\left(1+B,t_{+}\right)\in\left(u_{0},u_{1}\right)\times\left\{ t_{+}\right\} ,\\
\left(\left\langle v,sv\right\rangle +B,t_{+}\right) & =\left(s+B,t_{+}\right)\in\left(u_{0},u_{1}\right)\times\left\{ t_{+}\right\} 
\end{align*}
for all $s\in\left[0,1\right]$. Then by (A3), 
\[
K\left(1+B,t_{+}\right)\geq c_{1},\quad\partial_{1}K\left(1+B,t_{+}\right)\geq c_{2},\quad a.e.\:s\in\left[0,1\right].
\]

For the three auxiliary atoms, by boundedness and the Lipschitz assumption,
\[
\left|K\left(1-\beta,t_{+}\right)\partial_{1}K\left(1-\beta,t_{+}\right)\right|\leq\left\Vert K\right\Vert _{\infty}L_{K},\quad a.e.\:s\in\left[0,1\right].
\]
Altogether, 
\[
I_{spec}\geq\frac{1}{2}\left[w_{B}c_{1}c_{2}-\left(w_{0}+w_{1/2}+w_{1}\right)\left\Vert K\right\Vert _{\infty}L_{K}\right],
\]
and so 
\[
G'\left(s\right)\geq\frac{1}{2}\left[w_{B}c_{1}c_{2}-\left(w_{0}+w_{1/2}+w_{1}\right)\left\Vert K\right\Vert _{\infty}L_{K}\right]-M\eqqcolon m_{*}
\]
for a.e. $s\in\left[0,1\right]$. 

We can choose the weights $w_{B},w_{0},w_{1/2},w_{1}$ and $\rho_{uni}$
so that $m_{*}>0$. Hence $G$ is strictly increasing and $\triangle=G\left(1\right)-G\left(0\right)\geq m_{*}>0$. 
\end{proof}
Let $\beta_{0}=G\left(0\right)$, $\beta_{1}=G\left(1\right)$, and
$\triangle=\beta_{1}-\beta_{0}>0$. Assume (possibly after shrinking
the slice length and reparametrizing $s$ to $\left[0,1\right]$)
that 
\[
\triangle\leq u_{1}-u_{0}.
\]
Choose a shift $b_{out}$ so that 
\begin{equation}
\left[\beta_{0}+b_{out},\beta_{1}+b_{out}\right]\subset\left(u_{0},u_{1}\right),\label{eq:d-4}
\end{equation}
equivalently, 
\[
b_{out}\in\left(u_{0}-\beta_{0},u_{1}-\beta_{1}\right).
\]
(Thus, whenever $u\in\left[\beta_{0},\beta_{1}\right]$, we have $u+b_{out}\in\left(u_{0},u_{1}\right)$).

Augment $\rho_{spec}$ by including an additional atom at 
\[
z_{out}=\left(a,b,t\right)=\left(v,b_{out},t_{+}\right)
\]
with weight $w_{out}>0$. We renormalize the other $w$-weights so
that $\rho_{spec}$ remains a probability measure. The lower bound
$m_{*}$ from \eqref{lem:4-2} is preserved by taking $w_{out}$ small
and (if needed) slightly increasing $w_{B}$. 

Now define 
\begin{equation}
f_{2}\left(x\right)\coloneqq\alpha K\left(K^{\left(1\right)}\left(v,x\right)+b_{out},t_{+}\right)\label{eq:d5}
\end{equation}
with $\alpha>0$, a scalar to be chosen later. 
\begin{lem}
\label{lem:d3}With the atom $z_{out}$ present in $\rho_{spec}$,
one has $f_{2}\in\mathcal{H}^{\left(2\right)}$ and 
\[
\left\Vert f_{2}\right\Vert _{\mathcal{H}^{\left(2\right)}}\leq\alpha\sqrt{\frac{2}{w_{out}}}.
\]
\end{lem}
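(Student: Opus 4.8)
The plan is to produce one explicit representation of $f_2$ of the form $\int_Z c(z)\,\psi^{(2)}_z(x)\,d\rho(z)$ and then read off its $L^2(\rho)$ coefficient norm; since $\|f_2\|_{\mathcal{H}^{(2)}}$ is by definition the infimum of $\|c\|_{L^2(\rho)}$ over all such representations (see \eqref{eq:b1}--\eqref{eq:b2}), any single representation already furnishes an upper bound. The key observation is that $f_2$ in \eqref{eq:d5} is, up to the scalar $\alpha$, precisely the second-layer atom attached to the outer point $z_{out} = (v, b_{out}, t_+)$: indeed $\psi^{(2)}_{z_{out}}(x) = K(K^{(1)}(v,x) + b_{out}, t_+)$, so $f_2 = \alpha\,\psi^{(2)}_{z_{out}}$. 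I would therefore concentrate the entire coefficient $c$ on the single point $z_{out}$.

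Carrying this out, I would first record the $\rho$-mass of $z_{out}$. Because $\rho = \tfrac12\rho_{uni} + \tfrac12\rho_{spec}$ and $z_{out}$ is an atom of $\rho_{spec}$ with weight $w_{out}$, we have $\rho(\{z_{out}\}) \ge \tfrac12 w_{out} > 0$, with equality precisely when $\rho_{uni}$ assigns no mass to $z_{out}$. Setting $c(z) = \frac{\alpha}{\rho(\{z_{out}\})}\,1_{\{z = z_{out}\}}$ and $c \equiv 0$ otherwise, the defining integral collapses to the single atom,
\[
\int_Z c(z)\,\psi^{(2)}_z(x)\,d\rho(z) = \frac{\alpha}{\rho(\{z_{out}\})}\,\psi^{(2)}_{z_{out}}(x)\,\rho(\{z_{out}\}) = \alpha\,\psi^{(2)}_{z_{out}}(x) = f_2(x),
\]
valid for every $x \in \mathbb{R}^d$, which certifies $f_2 \in \mathcal{H}^{(2)}$. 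The norm of this coefficient is immediate from its support on one atom: $\|c\|^2_{L^2(\rho)} = |c(z_{out})|^2\,\rho(\{z_{out}\}) = \alpha^2/\rho(\{z_{out}\}) \le 2\alpha^2/w_{out}$, and taking square roots gives $\|f_2\|_{\mathcal{H}^{(2)}} \le \|c\|_{L^2(\rho)} \le \alpha\sqrt{2/w_{out}}$, as stated.

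There is no real analytic obstacle here; the statement is an exact analogue of the three-atom tent-function representation used in \prettyref{sec:3}, now streamlined to a single outer atom because $f_2$ is defined directly as one application of $K$ to the shifted inner kernel $K^{(1)}(v,\cdot) + b_{out}$. The only point meriting a word of care is the bookkeeping of $\rho(\{z_{out}\})$: a priori $\rho_{uni}$ might also charge $z_{out}$, but this can only enlarge $\rho(\{z_{out}\})$ and hence shrink the coefficient $c(z_{out}) = \alpha/\rho(\{z_{out}\})$, so the inequality $\|f_2\|_{\mathcal{H}^{(2)}} \le \alpha\sqrt{2/w_{out}}$ holds a fortiori. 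Implicitly one also uses that $K^{(1)}(v,x) + b_{out}$ lies in the range on which $K(\cdot,t_+)$ is defined, which is guaranteed for all $x$ by the boundedness hypothesis on $K$.
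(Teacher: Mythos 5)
Your proof is correct and follows essentially the same route as the paper's: both concentrate the coefficient $c$ on the single atom $z_{out}$, verify that the integral collapses to $\alpha\,\psi^{(2)}_{z_{out}} = f_2$, and read off the bound from the $L^2(\rho)$ norm of this one-atom coefficient together with the infimum definition of the RKHS norm. Your only (harmless) refinement is normalizing by $\rho(\{z_{out}\})$ rather than assuming it equals $\tfrac12 w_{out}$, which covers the edge case where $\rho_{uni}$ also charges the atom; the paper implicitly takes the mass to be exactly $\tfrac12 w_{out}$.
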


\begin{proof}
In $\mathcal{H}^{\left(2\right)}$, functions are of the form 
\[
x\mapsto\int_{Z}c\left(z\right)K\left(K^{\left(1\right)}\left(a,x\right)+b,t\right)d\rho\left(z\right).
\]
Take $c$ supported at $z_{out}$ with 
\[
c\left(z_{out}\right)=\frac{2\alpha}{w_{out}},\quad c\equiv0\;\text{elsewhere}.
\]
Since the mass of $z_{out}$ inside $\rho$ is $\frac{1}{2}w_{out}$,
\[
\int_{Z}c\left(z\right)\psi^{\left(2\right)}_{z}d\rho\left(z\right)=\frac{1}{2}w_{out}\cdot\frac{2\alpha}{w_{out}}K\left(K^{\left(1\right)}\left(v,x\right)+b_{out},t_{+}\right)=f_{2}\left(x\right),
\]
so $f_{2}\in\mathcal{H}^{\left(2\right)}$. Moreover, 
\[
\left\Vert f_{2}\right\Vert _{\mathcal{H}^{\left(2\right)}}\leq\left\Vert c\right\Vert _{L^{2}\left(\rho\right)}=\left(\frac{1}{2}w_{out}\left(\frac{2\alpha}{w_{out}}\right)^{2}\right)^{1/2}=\alpha\sqrt{\frac{2}{w_{out}}}.
\]
\end{proof}
\begin{lem}
\label{lem:d-4}Let $h\left(s\right)=f_{2}\left(sv\right)$, $s\in\left[0,1\right]$.
Then, for a.e. $s\in\left[0,1\right]$, 
\[
\left|h'\left(s\right)\right|\geq\alpha c_{2}m_{*}>0.
\]
\end{lem}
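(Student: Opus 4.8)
The plan is to collapse $h$ to a scalar composition and differentiate by the chain rule. Using \eqref{eq:d5} together with $G(s)=K^{(1)}(v,sv)$, I would first rewrite
\[
h(s)=f_{2}(sv)=\alpha\,K\bigl(G(s)+b_{out},\,t_{+}\bigr),\qquad s\in[0,1],
\]
so that formally $h'(s)=\alpha\,\partial_{1}K\bigl(G(s)+b_{out},t_{+}\bigr)\,G'(s)$. Each factor is then bounded below separately: $G'(s)\ge m_{*}$ a.e.\ by \prettyref{lem:4-2}, and $\partial_{1}K(\cdot,t_{+})\ge c_{2}$ by (A3), the latter applying because the argument stays inside the good window $(u_{0},u_{1})$. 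Indeed, since $G$ is increasing with $G(0)=\beta_{0}$ and $G(1)=\beta_{1}$, we have $G(s)\in[\beta_{0},\beta_{1}]$, hence $G(s)+b_{out}\in[\beta_{0}+b_{out},\beta_{1}+b_{out}]\subset(u_{0},u_{1})$ by the choice \eqref{eq:d-4} of $b_{out}$. Multiplying the two lower bounds and using $\alpha>0$ gives $h'(s)\ge\alpha c_{2}m_{*}>0$, so $|h'(s)|=h'(s)\ge\alpha c_{2}m_{*}$.

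Before invoking Rademacher I would record that $h$ is Lipschitz. The map $G$ is Lipschitz: boundedness of $K$, the Lipschitz estimate \eqref{eq:d1}, and $\sigma_{v}<\infty$ from \eqref{eq:d3} give $|G(s)-G(s')|\le\|K\|_{\infty}L_{K}\sigma_{v}|s-s'|$, the same bound already used for $G'$ in \prettyref{lem:4-2}. Composing with the Lipschitz map $u\mapsto\alpha K(u+b_{out},t_{+})$ keeps $h$ Lipschitz, so $h'$ exists a.e.\ and the chain-rule computation above makes sense at a.e.\ point.

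The step needing care is the passage from ``a.e.\ $u$'' in (A3) to ``a.e.\ $s$'' after the substitution $u=G(s)+b_{out}$, together with the validity of the chain rule when $K(\cdot,t_{+})$ is merely Lipschitz rather than $C^{1}$. Both the set where $u\mapsto K(u,t_{+})$ fails to be differentiable and the exceptional set in (A3) are Lebesgue-null in $u$. To push these through the substitution I would use that $G$ is strictly increasing with $G'\ge m_{*}>0$, so its inverse $G^{-1}$ is Lipschitz with constant $1/m_{*}$ and therefore maps null sets to null sets. Consequently, for a.e.\ $s$ both $G$ is differentiable at $s$ and $K(\cdot,t_{+})$ is differentiable at $G(s)+b_{out}$ with slope $\ge c_{2}$; at such $s$ the identity $h'(s)=\alpha\,\partial_{1}K(G(s)+b_{out},t_{+})\,G'(s)$ holds and the stated bound follows. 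This null-set transfer through the monotone reparametrization is the only genuinely delicate point; everything else is a direct chain-rule calculation.
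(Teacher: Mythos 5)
Your proof is correct and takes essentially the same route as the paper's: write $h(s)=\alpha K(G(s)+b_{out},t_{+})$, apply the chain rule $h'(s)=\alpha\,\partial_{1}K(G(s)+b_{out},t_{+})\,G'(s)$, bound the outer factor below by $c_{2}$ via (A3) since \eqref{eq:d-4} keeps $G(s)+b_{out}$ inside $(u_{0},u_{1})$, and bound $G'(s)\geq m_{*}$ via \prettyref{lem:4-2}. Your treatment of the a.e.\ chain rule---transferring the null sets of (A3) and of non-differentiability of $K(\cdot,t_{+})$ back through $G^{-1}$, which is Lipschitz because $G'\geq m_{*}>0$---is in fact more careful than the paper's one-line appeal to Rademacher and dominated convergence.
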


\begin{proof}
Write $u\left(s\right)=G\left(s\right)=K^{\left(1\right)}\left(v,sv\right)\in\left[\beta_{0},\beta_{1}\right]$
(since $G$ is strictly increasing). By \eqref{eq:d-4}, 
\[
u\left(s\right)+b_{out}\in\left(u_{0},u_{1}\right)
\]
for all $s$. By the chain rule (valid a.e. by Rademacher and dominated
convergence), 
\[
h'\left(s\right)=\alpha\partial_{1}K\left(u\left(s\right)+b_{out},t_{+}\right)\cdot G'\left(s\right).
\]
Assumption (A3) gives $\partial_{1}K\left(\cdot,t_{+}\right)\geq c_{2}$
on $\left(u_{0},u_{1}\right)$, and \eqref{lem:4-2} gives $G'\left(s\right)\geq m_{*}$
a.e. Hence $\left|h'\left(s\right)\right|\geq\alpha c_{2}m_{*}>0$
a.e.
\end{proof}
Let $h\left(s\right)=f_{2}\left(sv\right)$ for $s\in\left[0,1\right]$.
From \prettyref{lem:d-4} we have the uniform slope 
\begin{equation}
\left|h'\left(s\right)\right|\geq S\;a.e.\:\text{on }\left[0,1\right],\quad S\coloneqq\alpha c_{2}m_{*}>0.\label{eq:d-5}
\end{equation}
For any $g\in\mathcal{H}^{\left(1\right)}$, write $q\left(s\right)=g\left(sv\right)$.
By \prettyref{lem:d-1}, 
\begin{equation}
\left|q'\left(s\right)\right|\leq L\:a.e.\:\text{on }\left[0,1\right],\quad L\coloneqq L_{K}\sigma_{v}\left\Vert g\right\Vert _{\mathcal{H}^{\left(1\right)}}.\label{eq:d-6}
\end{equation}
Define the error $r\left(s\right)\coloneqq q\left(s\right)-h\left(s\right)$.
Then 
\[
r\left(1\right)-r\left(0\right)=\left(q\left(1\right)-q\left(0\right)\right)-\left(h\left(1\right)-h\left(0\right)\right),
\]
and by \eqref{eq:d-5}--\eqref{eq:d-6} we have 
\begin{align*}
\left|q\left(1\right)-q\left(0\right)\right| & \leq\int^{1}_{0}\left|q'\left(s\right)\right|ds\leq L,\\
\left|h\left(1\right)-h\left(0\right)\right| & \geq\int^{1}_{0}\left|h'\left(s\right)\right|ds\geq S.
\end{align*}
Hence 
\begin{equation}
\left|r\left(1\right)-r\left(0\right)\right|\geq S-L.\label{eq:d-7}
\end{equation}
Note that one can tune the construction constants $m_{*}$ and $\sigma_{v}$
to ensure $S-L>0$. 

We now relate the endpoint gap of $r$ to its $L^{2}$-error after
optimal vertical alignment. 
\begin{lem}
\label{lem:d-5}For any absolutely continuous $r\colon\left[0,1\right]\rightarrow\mathbb{R}$,
\[
\min_{c\in\mathbb{R}}\int^{1}_{0}\left(r\left(s\right)-c\right)^{2}ds\geq\frac{\left(r\left(1\right)-r\left(0\right)\right)^{2}}{12}.
\]
\end{lem}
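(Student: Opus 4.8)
The plan is to reduce the minimization over $c$ to a variance estimate and to read off the affine profile as the equality case. Expanding $\int_0^1(r(s)-c)^2\,ds=\int_0^1 r^2\,ds-2c\int_0^1 r\,ds+c^2$ shows the optimal constant is the mean $c_\star\coloneqq\int_0^1 r(s)\,ds$, so the left-hand side equals the variance $\int_0^1(r-c_\star)^2\,ds$. Setting $D\coloneqq r(1)-r(0)=\int_0^1 r'(s)\,ds$, the target becomes $\int_0^1(r-c_\star)^2\,ds\ge D^2/12$, and the affine interpolant $s\mapsto r(0)+Ds$ realizes equality, since its variance is $D^2\int_0^1(s-\tfrac12)^2\,ds=D^2/12$. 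So the lemma amounts to the claim that, among functions with prescribed endpoint gap $D$, the affine profile minimizes variance, and I would try to certify this extremality directly.

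To do so I would exploit the monotone structure that the relevant error $r=q-h$ carries in this section rather than its endpoints in isolation. By \eqref{eq:d-5}--\eqref{eq:d-6}, $h'$ has one sign with $|h'|\ge S$ while $|q'|\le L$, so $r'=q'-h'$ keeps a constant sign with $|r'|\ge\kappa\coloneqq S-L>0$ a.e.; in particular $r$ is strictly monotone. Continuity places $c_\star$ between $r(0)$ and $r(1)$, so there is a level-crossing $s_0\in[0,1]$ with $r(s_0)=c_\star$, and monotonicity together with $|r'|\ge\kappa$ gives $|r(s)-c_\star|=\bigl|\int_{s_0}^s r'\bigr|\ge\kappa\,|s-s_0|$ for every $s$. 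Integrating, $\int_0^1(r-c_\star)^2\,ds\ge\kappa^2\int_0^1(s-s_0)^2\,ds\ge\kappa^2/12$, the last minimum being attained at $s_0=\tfrac12$. This is the quantity the downstream sample-complexity estimate actually consumes, with $\kappa=S-L$.

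The step I expect to be the main obstacle is precisely the passage from endpoint data to pointwise control of the deviation. The map $r\mapsto r(1)-r(0)$ is not continuous on $L^2(0,1)$, so for a completely arbitrary absolutely continuous $r$ one cannot bound the variance below by $D$ using $L^2$ duality alone: a profile that sits at its mean except in thin boundary layers keeps $D$ fixed while its variance tends to $0$, and the general estimate one can actually prove degrades from $D^2/12$ to $\kappa^2/12$ with $\kappa=\mathrm{ess\,inf}|r'|\le|D|$, the two coinciding exactly in the affine case. The honest content of the argument is therefore that the constant $1/12$ is forced by the monotone slope lower bound $|r'|\ge\kappa$ supplied by \eqref{eq:d-5}--\eqref{eq:d-6}, and not by the endpoint difference in isolation; making this dependence explicit, and checking that only the value $\kappa=S-L$ is needed in the sequel, is the part I would take most care over.
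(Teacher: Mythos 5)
Your proposal is correct, and it does more than reprove the lemma: it correctly identifies that the lemma \emph{as stated} is false, and that the paper's proof sketch fails at its key step. Your boundary-layer example is decisive and easy to make concrete: let $r$ climb linearly from $0$ to $1/2$ on $[0,\delta]$, sit at $1/2$ on $[\delta,1-\delta]$, and climb to $1$ on $[1-\delta,1]$; then $r$ is absolutely continuous (indeed monotone), $r(1)-r(0)=1$, yet $\min_{c}\int_0^1\left(r-c\right)^2 ds\leq\int_0^1\left(r-\tfrac12\right)^2 ds=\delta/6$, which is below $1/12$ once $\delta<1/2$. Hence the paper's claim that, among functions with prescribed endpoints, $J(r)=\min_c\int\left(r-c\right)^2$ is minimized by the affine interpolant is wrong: the infimum over that class is $0$ and is not attained, while the affine function yields $D^2/12$. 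The correct hypothesis, as you say, is the pointwise slope bound with constant sign, and then the sharp constant is $\kappa^2/12$ with $\kappa=\operatorname{ess\,inf}|r'|$, not $D^2/12$.

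Your substitute argument is sound and is exactly what the sequel consumes. The constant-sign hypothesis is available: in the proof of \prettyref{lem:d-4} one has $h'=\alpha\,\partial_1 K\cdot G'\geq\alpha c_2 m_*>0$, so by \eqref{eq:d-5}--\eqref{eq:d-6} the error $r=q-h$ satisfies $r'\leq L-S<0$ a.e.\ whenever $S>L$; the mean $c_\star$ is attained at some $s_0$ by continuity, monotonicity gives $|r(s)-c_\star|\geq(S-L)|s-s_0|$, and integrating yields $\min_c\int_0^1\left(r-c\right)^2 ds\geq(S-L)^2\bigl(s_0^3+(1-s_0)^3\bigr)/3\geq(S-L)^2/12$. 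Since the paper only ever applies \prettyref{lem:d-5} to this $r$ together with \eqref{eq:d-7}, i.e.\ with endpoint gap exactly $S-L$, your corrected statement reproduces \eqref{eq:d-9}, and \eqref{eq:d-8} and \prettyref{thm:d-7} go through unchanged (the case $S\leq L$ is harmless since the right-hand side of \eqref{eq:d-8} is then zero). Your diagnosis also explains the constant discrepancy with Section~3: there $h'$ changes sign once at $s_*$, the slope argument applies on each of $[0,s_*]$ and $[s_*,1]$ separately, and $\min_{s_*}\bigl(s_*^3+(1-s_*)^3\bigr)/12=1/48$, matching \prettyref{lem:7}. The one cosmetic flaw in your write-up is the framing of the first paragraph, which briefly entertains "certifying" an extremality claim you then correctly refute; the final text should simply state the lemma with the hypothesis $r'$ of constant sign and $|r'|\geq\kappa$ a.e.\ and conclusion $\min_c\int_0^1\left(r-c\right)^2 ds\geq\kappa^2/12$.
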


\begin{proof}
(sketch) Among all $r$ with prescribed endpoints $r\left(0\right),r\left(1\right)$,
the function $J\left(r\right)=\min_{c}\int\left(r-c\right)^{2}$ is
minimized by the affine function connecting the endpoints. For the
affine $r\left(s\right)=r\left(0\right)+Ds$ (with $D=r\left(1\right)-r\left(0\right)$),
the minimizer $c$ is the mean $r\left(0\right)+D/2$, giving $J\left(r\right)=\int^{1}_{0}\left(Ds-D/2\right)^{2}ds=D^{2}/12$.
\end{proof}
\begin{lem}
If a shallow $g\in\mathcal{H}^{\left(1\right)}$ achieves slice error
$\int^{1}_{0}\left|q-h\right|^{2}ds\leq\epsilon^{2}$, then
\begin{equation}
\left\Vert g\right\Vert _{\mathcal{H}^{\left(1\right)}}\ge\frac{\left(\alpha c_{2}m_{*}-\sqrt{12}\epsilon\right)_{+}}{L_{K}\sigma_{v}}.\label{eq:d-8}
\end{equation}
\end{lem}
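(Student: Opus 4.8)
The plan is to convert the slope information already assembled in \eqref{eq:d-5}--\eqref{eq:d-7} into an $L^{2}$ lower bound on the error via \prettyref{lem:d-5}, and then solve the resulting scalar inequality for $\left\Vert g\right\Vert _{\mathcal{H}^{\left(1\right)}}$. The structural simplification relative to the ReLU tent of \prettyref{sec:3} is that, by \prettyref{lem:d-4}, the target slice $h$ now has a sign-definite derivative $h'\geq S>0$, so $h$ is strictly increasing and one may compare endpoints globally on $\left[0,1\right]$ rather than splitting at a peak; this is exactly what replaces the piecewise constant $\tfrac{1}{48}$ of \prettyref{lem:7} by the cleaner $\tfrac{1}{12}$.

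Concretely, I would set $r\coloneqq q-h$, which is absolutely continuous on $\left[0,1\right]$ since $h$ is Lipschitz along the slice (\prettyref{lem:d-4}) and $q$ is Lipschitz by \prettyref{lem:d-1}. Choosing the trivial shift $c=0$ in the minimization gives $\int_{0}^{1}\left|r\right|^{2}ds\geq\min_{c}\int_{0}^{1}\left(r-c\right)^{2}ds$, and \prettyref{lem:d-5} then yields
\[
\int_{0}^{1}\left|r\left(s\right)\right|^{2}ds\;\geq\;\frac{\left(r\left(1\right)-r\left(0\right)\right)^{2}}{12}.
\]
Next I would feed in the endpoint gap \eqref{eq:d-7}, namely $\left|r\left(1\right)-r\left(0\right)\right|\geq S-L$ with $S=\alpha c_{2}m_{*}$ and $L=L_{K}\sigma_{v}\left\Vert g\right\Vert _{\mathcal{H}^{\left(1\right)}}$. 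Squaring requires a little sign bookkeeping: since $\left(r\left(1\right)-r\left(0\right)\right)^{2}\geq\big(\left(S-L\right)_{+}\big)^{2}$ holds whether or not $S-L$ is positive, the hypothesis $\int_{0}^{1}\left|r\right|^{2}ds\leq\epsilon^{2}$ gives $\epsilon^{2}\geq\big(\left(S-L\right)_{+}\big)^{2}/12$, i.e. $\left(S-L\right)_{+}\leq\sqrt{12}\,\epsilon$.

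Finally I would unwind the definitions. From $\left(S-L\right)_{+}\leq\sqrt{12}\,\epsilon$ one gets $S-L\leq\sqrt{12}\,\epsilon$, hence $L_{K}\sigma_{v}\left\Vert g\right\Vert _{\mathcal{H}^{\left(1\right)}}=L\geq S-\sqrt{12}\,\epsilon=\alpha c_{2}m_{*}-\sqrt{12}\,\epsilon$; because the norm is nonnegative, the right-hand side may be replaced by its positive part, and dividing by $L_{K}\sigma_{v}$ gives exactly \eqref{eq:d-8}. The only point demanding care is this positive-part bookkeeping in the regime where $S-L$ can be negative (and the bound is vacuous), together with checking the absolute continuity of $r$ so that \prettyref{lem:d-5} applies; everything else is a direct substitution, and no new estimate on the kernel or on $\rho$ is needed beyond those already established in \eqref{eq:d-5}--\eqref{eq:d-7}.
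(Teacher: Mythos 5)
Your proof is correct and follows essentially the same route as the paper's: apply \prettyref{lem:d-5} to $r=q-h$, insert the endpoint gap \eqref{eq:d-7}, and solve the resulting scalar inequality for $\left\Vert g\right\Vert _{\mathcal{H}^{\left(1\right)}}$. Your positive-part bookkeeping is in fact slightly more careful than the paper's, which writes $\left(S-L\right)^{2}/12$ without guarding the (vacuous) case $S-L<0$, but this is a refinement of the same argument rather than a different one.
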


\begin{proof}
Applying \prettyref{lem:d-5} to $r\left(s\right)=q\left(s\right)-h\left(s\right)$
and using \eqref{eq:d-7}, 
\[
\min_{c}\int^{1}_{0}\left(q\left(s\right)-h\left(s\right)-c\right)^{2}ds\geq\frac{\left(S-L\right)^{2}}{12}.
\]
Since $\int^{1}_{0}\left|q-h\right|^{2}\geq\min_{c}\int^{1}_{0}\left(q-r-c\right)^{2}$,
we obtain the slope-norm tradeoff 
\begin{equation}
\int^{1}_{0}\left|q-h\right|^{2}ds\geq\frac{1}{12}\left(S-L\right)^{2}.\label{eq:d-9}
\end{equation}
Consequently, if $\int^{1}_{0}\left|q-h\right|^{2}ds\leq\epsilon^{2}$,
then combining \eqref{eq:d-6} and \eqref{eq:d-9} gives 
\[
L_{K}\sigma_{v}\left\Vert g\right\Vert _{\mathcal{H}^{\left(1\right)}}\geq S-\sqrt{12}\epsilon
\]
and \eqref{eq:d-8} follows.

Below is the general $K$ depth-separation on the slice, analogous
to the ReLU case in \prettyref{thm:7}, with explicit dependence on
the construction parameters.
\end{proof}
\begin{thm}[Depth Separation for General $K$]
\label{thm:d-7}Let the approximation error be measured on the slice
$x=sv$, $s\in\left[0,1\right]$, with respect to the uniform measure
$\mu_{v}$. Given $\epsilon>0$ sufficiently small, there exists a
function $f_{2}\in\mathcal{H}^{\left(2\right)}$, such that 
\[
\frac{N_{shallow}\left(\epsilon\right)}{N_{deep}\left(\epsilon\right)}\sim m^{2}_{*}\rightarrow\infty
\]
where $m_{*}$ is a tunable construction parameter.
\end{thm}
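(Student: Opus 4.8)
The plan is to run exactly the two-sided argument of \prettyref{thm:7}, substituting the general-$K$ lemmas of this section for their ReLU counterparts. First I would produce an upper bound on $N_{deep}$ from the explicit depth-$2$ representation of \prettyref{lem:d3} together with the Monte Carlo variance bound \prettyref{lem:c7}; then a matching lower bound on $N_{shallow}$ from the slope--norm tradeoff \eqref{eq:d-8} and the same variance bound; and finally I would divide the two estimates and track the dependence on the tunable slope $m_*$.

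For the deep side, take $f_2$ as in \eqref{eq:d5} and the single-atom coefficient $c$ from the proof of \prettyref{lem:d3}, so that $\left\Vert c\right\Vert_{L^2(\rho)}=\alpha\sqrt{2/w_{out}}$. Applying \prettyref{lem:c7} at level $l=2$ gives
\[
\mathbb{E}\left[\left\Vert \hat{f}_N-f_2\right\Vert^2_{L^2(\mu_v)}\right]\leq\frac{V_2}{N}\cdot\frac{2\alpha^2}{w_{out}},
\]
which drops below $\epsilon^2$ once $N\geq 2V_2\alpha^2/(w_{out}\epsilon^2)$; hence $N_{deep}(\epsilon)\leq 2V_2\alpha^2/(w_{out}\epsilon^2)$.

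For the shallow side, fix any $g\in\mathcal{H}^{(1)}$ and decompose the error of its estimator $\hat{g}_N$ by the bias--variance identity
\[
\mathbb{E}\left[\left\Vert \hat{g}_N-f_2\right\Vert^2_{L^2(\mu_v)}\right]=\frac{V_1}{N}\left\Vert g\right\Vert^2_{\mathcal{H}^{(1)}}+\left\Vert g-f_2\right\Vert^2_{L^2(\mu_v)}.
\]
Forcing the total below $\epsilon^2$ makes each summand at most $\epsilon^2/2$; the bias constraint caps the slice error at $\epsilon^2/2$, so \eqref{eq:d-8} (with $\epsilon$ replaced by $\epsilon/\sqrt2$) gives $\left\Vert g\right\Vert_{\mathcal{H}^{(1)}}\geq(\alpha c_2 m_*-\sqrt6\,\epsilon)_+/(L_K\sigma_v)$. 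Feeding this into the variance constraint $\tfrac{V_1}{N}\left\Vert g\right\Vert^2\leq\epsilon^2/2$ yields, for all sufficiently small $\epsilon$,
\[
N_{shallow}(\epsilon)\geq\frac{2V_1}{\epsilon^2}\cdot\frac{(\alpha c_2 m_*-\sqrt6\,\epsilon)_+^2}{L_K^2\sigma_v^2}\gtrsim\frac{V_1\,\alpha^2 c_2^2 m_*^2}{L_K^2\sigma_v^2\,\epsilon^2}.
\]
Dividing the two estimates, both the scalar $\alpha^2$ and the factor $\epsilon^{-2}$ cancel, leaving $N_{shallow}/N_{deep}\gtrsim\bigl(V_1 w_{out}c_2^2/(2V_2 L_K^2\sigma_v^2)\bigr)m_*^2\sim m_*^2$, since $V_1,V_2,w_{out},c_2,L_K,\sigma_v$ are fixed constants. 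I would stress that the cancellation of $\alpha$ shows the scalar in \eqref{eq:d5} is irrelevant to the separation, which is governed purely by the ratio of achievable slopes.

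The hard part will be certifying that $m_*$ is genuinely tunable without bound. In the ReLU case $m_*\sim B$ grew because the unbounded activation let the atom at $(v,B)$ contribute linearly in $B$; here $K$ is bounded, and the estimate in \prettyref{lem:4-2} caps the single steep atom's contribution at essentially $\tfrac12 c_1 c_2$, a $K$-dependent constant. To make the ratio diverge I would therefore either enrich $\rho_{spec}$ so that the effective slope of $G$ over the (possibly rescaled) unit slice grows --- exploiting the freedom, noted earlier in the construction, to shrink the slice and reparametrize to $[0,1]$, which rescales $G'$ --- or else read $m_*$ as the largest slope attainable under (A1)--(A3) and phrase the separation as $\sim m_*^2$ for that optimal $m_*$. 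Verifying that such a rescaling amplifies $m_*$ while leaving $\sigma_v$, $V_1$, $V_2$, $w_{out}$, and the bias geometry controlled is the one place where the bounded-$K$ argument departs from the ReLU template, and is where I expect the real work to lie.
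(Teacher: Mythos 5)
Your two quantitative bounds and their ratio reproduce the paper's own proof essentially step for step: the deep upper bound $N_{\mathrm{deep}}(\epsilon)\leq 2V_{2}\alpha^{2}/(w_{out}\epsilon^{2})$ from \prettyref{lem:d3} plus \prettyref{lem:c7}, the shallow lower bound obtained by splitting bias and variance at $\epsilon^{2}/2$ each and feeding \eqref{eq:d-8} (with $\epsilon/\sqrt{2}$) into the variance constraint, and the cancellation of $\alpha^{2}$ and $\epsilon^{-2}$ in the quotient; up to factors of $2$ your constants agree with \eqref{eq:d12}--\eqref{eq:d-14}. So the body of your argument is the paper's argument.

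The step you flagged as the hard part --- certifying that $m_{*}$ is genuinely unbounded --- is exactly where the paper's own proof is weakest, and your diagnosis is correct. The paper dismisses it in one sentence (``by \prettyref{lem:4-2}, we can make $m_{*}$ arbitrarily large by increasing $w_{B}$ and adjusting $\rho_{uni}$''), but $w_{B}$ is a probability weight, $w_{B}\leq1$, so the very estimate of \prettyref{lem:4-2} forces
\[
m_{*}\leq\tfrac{1}{2}w_{B}c_{1}c_{2}\leq\tfrac{1}{2}c_{1}c_{2}\leq\tfrac{1}{2}\left\Vert K\right\Vert _{\infty}L_{K},
\]
a constant determined by $K$ alone (more bluntly, any a.e.\ lower bound on $G'$ is at most $\sup\left|G'\right|\leq\left\Vert K\right\Vert _{\infty}L_{K}\sigma_{v}$). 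For a fixed bounded Lipschitz $K$ satisfying (A1)--(A3), the divergence $m_{*}\rightarrow\infty$ is therefore impossible; the ReLU case of \prettyref{thm:7} escapes this only because the unbounded activation lets the atom at $(v,B)$ contribute $\sim B$. Be aware, however, that neither of your proposed repairs closes this gap: shrinking the slice to length $\delta$ and reparametrizing to $[0,1]$ multiplies the deep slope $S$ and the shallow Lipschitz bound $L$ by the same factor $\delta$, so the quantity $(S-L)$ and hence the separation ratio are scale-invariant and nothing is gained; and reading $m_{*}$ as the best constant attainable under (A1)--(A3) yields only a bounded separation, not $N_{\mathrm{shallow}}/N_{\mathrm{deep}}\rightarrow\infty$. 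A genuine fix must vary the kernel itself (a family of admissible $K$ with $c_{1}c_{2}\rightarrow\infty$, hence growing $\left\Vert K\right\Vert _{\infty}L_{K}$), which lies outside the fixed-$K$ hypotheses of the theorem as stated --- so what you have identified is not a defect of your write-up but an unrepaired gap in the paper's proof of \prettyref{thm:d-7}.
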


\begin{proof}
Fix the slice $x=sv$, $s\in\left[0,1\right]$, with slice measure
$\mu_{v}$ uniform on $\left[0,1\right]$. Recall that, by \prettyref{lem:c7},
for any $f\in\mathcal{H}^{\left(l\right)}$ of the form 
\[
f\left(x\right)=\int_{Z}c\left(z\right)\psi^{\left(l\right)}_{z}\left(x\right)d\rho\left(z\right),
\]
where $\psi^{\left(1\right)}_{z}\left(x\right)=K\left(\left\langle a,x\right\rangle +b,t\right)$,
and $\psi^{\left(l\right)}_{z}\left(x\right)=K\left(K^{\left(l-1\right)}\left(a,x\right)+b,t\right)$,
define the MC estimator 
\[
\hat{f}_{N}\left(x\right)=\frac{1}{N}\sum^{N}_{j=1}c\left(z_{j}\right)\psi^{\left(l\right)}_{z_{j}}\left(x\right)
\]
with $z_{1},\dots,z_{N}\overset{i.i.d}{\sim}\rho$. Then $\mathbb{E}[\hat{f}_{N}]=f$,
and 
\begin{equation}
\mathbb{E}\left[\left\Vert \hat{f}_{N}-f\right\Vert ^{2}_{L^{2}\left(\mu_{v}\right)}\right]\leq\frac{V_{l}}{N}\left\Vert c\right\Vert ^{2}_{L^{2}\left(\rho\right)},\quad V_{l}\coloneqq\sup_{z}\int^{1}_{0}\left|\psi^{\left(l\right)}_{z}\left(sv\right)\right|^{2}ds.\label{eq:d-11}
\end{equation}
Under (A1)--(A2) with $K$ bounded, we have $V_{l}\leq\left\Vert K\right\Vert ^{2}_{\infty}$
for every $l$ (since $|\psi^{\left(l\right)}_{z}|\leq\left\Vert K\right\Vert _{\infty}$).

For the target function (see \eqref{eq:d5})
\[
f_{2}\left(x\right)=\alpha K\left(K^{\left(1\right)}\left(v,x\right)+b_{out},t_{+}\right),
\]
take the coefficient $c$ supported at the atom $z_{out}=\left(v,b_{out},t_{+}\right)$
with value $c\left(z_{out}\right)=2\alpha/w_{out}$ and zero elsewhere.
As computed before (\prettyref{lem:d3}), 
\[
\left\Vert c\right\Vert ^{2}_{L^{2}\left(\rho\right)}=\frac{2\alpha^{2}}{w_{out}}.
\]
Applying \eqref{eq:d-11} with $l=2$, 
\[
\mathbb{E}\left[\left\Vert \hat{f}_{N}-f_{2}\right\Vert ^{2}_{L^{2}\left(\mu_{v}\right)}\right]\leq\frac{V_{2}}{N}\cdot\frac{2\alpha^{2}}{w_{out}}.
\]
Thus, to achieve squared slice error $\leq\epsilon^{2}$, we get
\[
N\geq\frac{2V_{2}}{w_{out}}\frac{\alpha^{2}}{\epsilon^{2}}.
\]
Thus, the minimal number of samples satisfies
\begin{equation}
N_{\text{deep}}\left(\epsilon\right)\leq\frac{2V_{2}}{w_{out}}\frac{\alpha^{2}}{\epsilon^{2}}.\label{eq:d12}
\end{equation}

Let $g\in\mathcal{H}^{\left(1\right)}$ be any shallow candidate and
$\hat{g}_{N}$ its MC estimator. By \eqref{eq:d-11} with $l=1$,
\begin{equation}
\mathbb{E}\left[\left\Vert \hat{g}_{N}-g\right\Vert ^{2}_{L^{2}\left(\mu_{v}\right)}\right]\leq\frac{V_{1}}{N}\left\Vert g\right\Vert ^{2}_{\mathcal{H}^{\left(1\right)}}.\label{eq:d-12}
\end{equation}
From \eqref{eq:d-8}, if $\left\Vert g-f_{2}\right\Vert ^{2}_{L^{2}\left(\mu_{v}\right)}\leq\epsilon^{2}/2$,
then 
\begin{equation}
\left\Vert g\right\Vert _{\mathcal{H}^{\left(1\right)}}\geq\frac{\left(\alpha c_{2}m_{*}-\sqrt{6}\epsilon\right)_{+}}{L_{K}\sigma_{v}}.\label{eq:d-13}
\end{equation}
To make the total expected error satisfy $\mathbb{E}\left[\left\Vert \hat{g}_{N}-f_{2}\right\Vert ^{2}_{L^{2}\left(\mu_{v}\right)}\right]\leq\epsilon^{2}$,
it suffices to require the two terms to be $\leq\epsilon^{2}/2$ each:
\[
\mathbb{E}\left[\left\Vert \hat{g}_{N}-g\right\Vert ^{2}_{L^{2}\left(\mu_{v}\right)}\right]\leq\epsilon^{2}/2,\quad\left\Vert g-f_{2}\right\Vert ^{2}_{L^{2}\left(\mu_{v}\right)}\leq\epsilon^{2}/2.
\]
Using \eqref{eq:d-12}-\eqref{eq:d-13}, any such shallow procedure
must have 
\[
N_{\text{shallow}}\left(\epsilon\right)\geq\frac{2V_{1}}{\epsilon^{2}}\left(\frac{\alpha c_{2}m_{*}-\sqrt{6}\epsilon}{L_{K}\sigma_{v}}\right)^{2}_{+}.
\]
In particular, for sufficiently small $\epsilon$, 
\begin{equation}
N_{\text{shallow}}\left(\epsilon\right)\apprge\frac{2V_{1}}{\epsilon^{2}}\frac{\alpha^{2}c^{2}_{2}m^{2}_{*}}{L^{2}_{K}\sigma^{2}_{v}}.\label{eq:d-14}
\end{equation}
Combining \eqref{eq:d12} and \eqref{eq:d-14}, the ratio satisfies,
for small $\epsilon$, 
\[
\frac{N_{\text{shallow}}\left(\epsilon\right)}{N_{\text{deep}}\left(\epsilon\right)}\apprge\frac{V_{1}}{V_{2}}\frac{w_{out}}{L^{2}_{K}}\frac{c^{2}_{2}}{\sigma^{2}_{v}}m^{2}_{*}
\]

All constants $V_{1}$, $V_{2}$, $w_{out}$, $c_{2}$, $L_{K}$,
$\sigma_{v}$ are fixed by the construction. By \prettyref{lem:4-2},
we can make $m_{*}$ arbitrarily large (by increasing $w_{B}$ and
adjusting $\rho_{uni}$ as discussed). Hence the ratio can be made
arbitrarily large: 
\[
\frac{N_{\text{shallow}}\left(\epsilon\right)}{N_{\text{deep}}\left(\epsilon\right)}\rightarrow\infty,\;\text{as }m_{*}\rightarrow\infty.
\]
\end{proof}

\bibliographystyle{plain}
\bibliography{ref}

\end{document}